\newtheorem{theorem}{Theorem}
\newtheorem{corollary}{Corollary}
\newtheorem{definition}{Definition}
\newtheorem{rmk}{Remark}
\title{Signature features with the visibility transformation}
\author{
  Yue Wu \\
  Mathematical Institute\\
  University of Oxford\\
  Oxford,  OX2 6GG, UK\\ 
  Alan Turning Institute\\
  London, UK\\
  \texttt{yue.wu@maths.ox.ac.uk} \\
       \And 
   Hao Ni \\
 Department of Mathematics\\
   University College of London\\
 Alan Turning Institute\\
  London, UK\\
  \texttt{ h.ni@ucl.ac.uk}  \\
     \And 
        Terence J. Lyons \\
 Mathematical Institute\\
  University of Oxford\\
  Oxford, OX2 6GG, UK \\
    Alan Turning Institute\\
  London, UK\\
  \texttt{lyons@maths.ox.ac.uk}  \\
        \And 
   Robin L. Hudson \\
 Department of Mathematical Sciences\\
  Loughborough University\\
  Loughborough, LE11 3TU, UK \\
  \texttt{r.hudson@lboro.ac.uk}  \\
}
\begin{document}
\maketitle

\begin{abstract}
In this paper we put the visibility transformation on a clear theoretical footing and show that this transform is able to embed the effect of the absolute position of the data stream into signature features in a unified and efficient way. The generated feature set is particularly useful in pattern recognition tasks, for its simplifying role in allowing the signature feature set to accommodate nonlinear functions of absolute and relative values.
\end{abstract}


\keywords{Signature features \and The visibility transformation}
\noindent \textbf{2020 Mathematics Subject Classification.} 60L10
 \section{Introduction}\label{sec:intro}

Feature extraction is the key to effective model construction in the context of machine learning. Real-world complex data always comes with lots of inherent noise and variation, thus a good and unified choice of feature is needed to provide informative resources, and to facilitate subsequent learning. The focus of this paper, namely, the visibility transformation, designed to retain information about value of the position within the corresponding signature feature, is able to capture comprehensive information of the data stream hidden in both increments and positions in one shot.

It is well-known that the signature of streamed data, which consists of an infinite sequence of coordinate iterated integrals, makes use only of the increments of the path generated from the streamed data rather than the absolute values. With its nature to capture the total ordering of the streamed data and to summarise the data over segments,  signature-based machine learning models have proved efficient
 in several fields of application, from automated recognition of Chinese handwriting \cite{gaham2013sparse, xie2017learning} to diagnosis of mental health problems \cite{moore2019using, wang2019speech,wang2020speech}. However, for some scenarios, handwriting recognition and human action recognition for example, the position information is  informative for characterising the temporal dynamics as well. 
It is therefore quite demanding to introduce some transform that can preserve the effects of both increments and positions simultaneously. The first attempt was to use the so-called the \emph{invisibility-reset transformation} in the numerical experiments in \cite{yang2017skeleton} for skeleton-based human action recognition tasks. For a discrete data stream, this discrete transformation is able to incorporate its initial position value into signature features but the authors 
were not aware of the nonlinear effect of the tail position being captured at the same time and failed to disclose the true (and continuous) form of this discrete transformation via piecewise linear interpolation. In this paper, we generalise the idea of invisibility-reset transformation to the unified framework of visibility transformation for incorporating different position points of the path, and provide theoretical justifications. The visibility transformation, through translating the path to a new path, offers a new way of preparing datasets and does not need to change the pipeline (see the workflow Figure \ref{fig:workflow}). Its ability in capturing nonlinear effects on path segments and path positions simultaneously (see Theorem \ref{thm:general2} and Theorem \ref{thm:general1}), sheds a light on its better performance compared to the performance attained by using the signature alone in certain circumstances.   The availability of the well-established Python packages for calculating signature features from data streams allows easy implementation of extraction of features using the visibility transformation. Owing to the fundamental nature of the framework, we foresee a multifaceted impact in data-driven pattern recognition applications. 

The paper is organised as follows. In Section \ref{sec:pre} the relevant foundations concerning the signature are reviewed briefly. In Section \ref{sec:visibility} we formulate the visibility transformation map for bounded variation paths using the concatenation operator, and discuss its ability to capture the effects of the positions as well as increments. Its discrete version for the streamed data is introduced in Section \ref{sec:visibility2}, and then assessed in different pattern recognition applications in Section \ref{sec:app}. We conclude our paper in Section \ref{sec:con}. All the proofs are postponed to the Appendix.

\begin{figure*}[!t]
\centering
\includegraphics[trim=2.3cm 22cm 2.3cm 4cm, clip,width=6in]{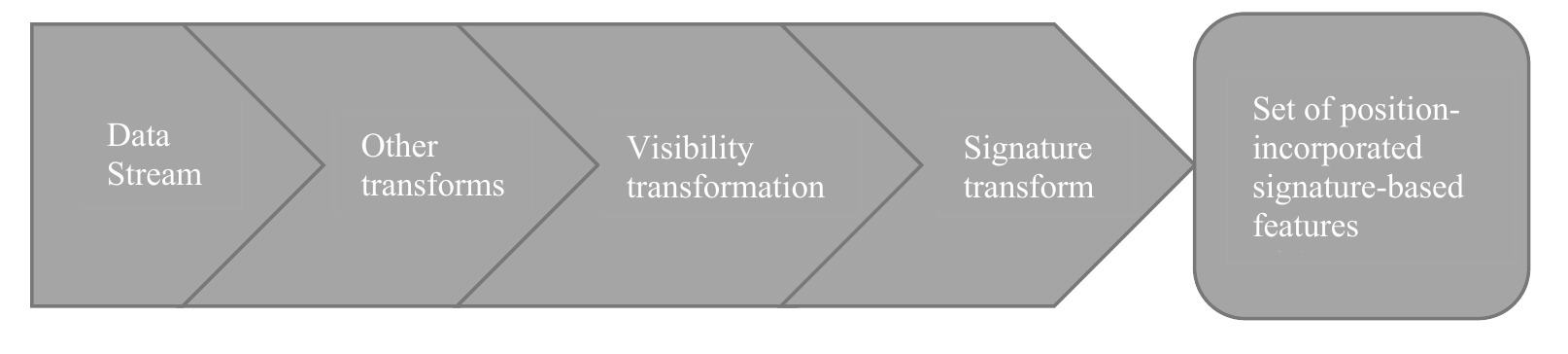}
\caption{The workflow of the pipeline for feature extraction using visibility transformation: the input is the original data stream on the left end, and one may utilise different transforms on the data stream for data cleaning and scaling; this is followed by applying the visibility transformation on the cleaned data; finally one will use package to translate the transformed data to signature.}
\label{fig:workflow}
\end{figure*}

\section{Preliminaries in signatures} \label{sec:pre}
We consider $\mathbb{R}^d$-valued time-dependent, piecewise-differentiable paths of finite length. Such a path $X$ mapping from $[a,b]$ to $\mathbb{R}^d$ is denoted as $X:[a,b] \to \mathbb{R}^d$. Denote by $\mathbf{I}(X)$ the initial position $X(a)$ of path $X$ and $\mathbf{T}(X)$ the tail position $X(b)$ of path $X$. For short we will use $X_t$ for $X(t), t\in [a,b]$. Each coordinate path of $X$ is a real-valued path and denoted as $X^i, i\in [d]$ with $[d]:=\{1,\ldots,d\}$. Now for a fixed ordered multi-index collection $(i_1,\ldots,i_{k})$, with $k\in \mathbb{N}$ and $i_j\in [d]$ for $j \in [k]$, define the coordinate iterated integral by
{\small  \begin{equation}\label{eqn:iterated_integral}
S(X)_{a,t}^{i_1,\ldots,i_{k}}:=\int_{a<t_{k}<t}\cdots \int_{a<t_{1}<t_2} \text{d}X_{t_1}^{i_1}\ldots \text{d}X_{t_k}^{i_k},
\end{equation}}
where the subscript $a,t$ denotes the lower and upper limits of the integral. It is easy to verify the recursive relation
{\small  \begin{equation}
S(X)_{a,t}^{i_1,\ldots,i_{k}}=\int_{a<t_{k}<t} S(X)_{a,t_{k}}^{i_1,\ldots,i_{k-1}}\text{d}X_{t_{k}}^{i_{k}}.
\end{equation}}
\begin{definition}\label{def:signature}
The \emph{signature} of a path $X:[a,b]\to \mathbb{R}^d$, denoted by $S(X)_{a,b}$, is the infinite collection of all iterated integrals of $X$. That is, 
{\small  \begin{equation}\label{eqn:signature}
    S(X)_{a,b}:=(1,S(X)_{a,b}^1, \ldots,S(X)_{a,b}^d, S(X)_{a,b}^{1,1}, S(X)_{a,b}^{1,2},\ldots),
\end{equation}}
where, the $0$th term is 1 by convention, and the superscripts of the terms after the $0$th term run along the set of all multi-index
$\{(i_1,\ldots,i_k)|k\geq 1, i_1,\ldots,i_k \in [d]\}$. The finite collection of all terms $S(X)_{a,b}^{i_1,\ldots,i_k}$ with the multi-index of fixed length $k$ is termed as the \emph{kth level of the signature}. The truncated signature up to the $p$th level is denoted by $\lfloor S(X)_{a,b} \rfloor_p$.
\end{definition}
It is not hard to deduce that the length of the signature up to level $p$ of a $d$-dimensional path is $\frac{d(d^p-1)}{d-1}$. In practice, truncating the signature at given level transforms input data of different lengths into one-dimensional feature vectors of the same length.

The definition also reveals that the signature only captures the effect of pattern change and not ones depending on the absolute position \cite{lyons2014signature}. This is further supported by the following calculation from the definition: 
{\small  \begin{equation}
S(X)_{a,b}^{\overbrace{l,\ldots,l}^{k}}=\frac{1}{k!}(X^l_b-X^l_a)^k,\text{\ for\  }k\in \mathbb{N}\ \text{and\ }l\in [d].
\end{equation}}
Note these terms of the signature are completely described by the increments of the coordinates on the right hand side.

Alternatively, the signature of a path $S(X)$ can be viewed as a non-commutative polynomial on the path space. This leads to the following representation of $S(X)$ 
 \begin{equation}\label{eqn:power_rep}
    S(X)_{a,b}= \sum_{k=0}^{\infty} \sum_{i_1,\ldots,i_k\in [d]}S^{i_1,\ldots,i_k}_{a,b} e_{i_1}\cdots e_{i_k},
\end{equation}
where $S^{i_1,\ldots,i_k}_{a,b}$ are coefficients of $S(X)_{a,b}$ and $e_{i_1}\ldots e_{i_k}$ are \emph{monomials} in the \emph{tensor algebra} of $\mathbb{R}^d$.
This gives rise to the multiplicative property of the signature called Chen's identity (c.f. \cite{chen1958chen, lyons2002control}), which is crucial for proving one of the main results Theorem \ref{thm:general2}.
Before proceeding to Chen's identity, we introduce two more important concepts for paths.
\begin{definition}\label{def:con1}
Given two continuous paths $X:[a,b]\to \mathbb{R}^d$ and $Y:[c,d]\to \mathbb{R}^d$ with $\mathbf{I}(X)=\mathbf{T}(Y)$. The \emph{concatenation product} $X*Y:[a,b+d-c]\to \mathbb{R}^d$ is the continuous path and defined by
{\small  \begin{equation}
 X*Y(t):=
\begin{dcases}
X(t), &t\in [a,b] \\
Y(t+c-b), & t\in [b,b+d-c].
\end{dcases}   
\end{equation}}
Also the \emph{reversal operation} $\stackrel{\leftarrow}{X}:[a,b]\to \mathbb{R}^d$ is defined by
{\small \begin{equation}
\stackrel{\leftarrow}{X}(t):=X(a+b-t)\ \ \text{for}\ t\in [a,b]. 
\end{equation}}
\end{definition}
We can now present the classical Chen's identity as follows.  
\begin{theorem}[Chen's identity] \label{thm:chen} Given two continuous paths $X:[a,b]\to \mathbb{R}^d$ and $Y:[c,d]\to \mathbb{R}^d$ such that $\mathbf{I}(X)=\mathbf{T}(Y)$. 
Then
{\small \begin{equation}\label{eqn:chen}
  S(X*Y)_{a,b+d-c}=S(X)_{a,b}\otimes S(Y)_{c,d}.  
\end{equation}}
\end{theorem}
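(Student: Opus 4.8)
The plan is to reduce Chen's identity to a coordinate-wise statement about iterated integrals and then establish that statement by induction on the signature level $k$. Recall from the tensor-algebra representation \eqref{eqn:power_rep} that, by the definition of the tensor product of two group-like series, the coefficient of the monomial $e_{i_1}\cdots e_{i_k}$ in $S(X)_{a,b}\otimes S(Y)_{c,d}$ is obtained by deconcatenating the word $(i_1,\ldots,i_k)$, namely
\begin{equation*}
\sum_{j=0}^{k} S(X)^{i_1,\ldots,i_j}_{a,b}\, S(Y)^{i_{j+1},\ldots,i_k}_{c,d},
\end{equation*}
where the $j=0$ and $j=k$ terms carry the empty multi-index whose iterated integral is $1$ by convention. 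Hence the identity \eqref{eqn:chen} is equivalent to the family of scalar identities
\begin{equation}\label{eqn:chen_coord}
S(X*Y)^{i_1,\ldots,i_k}_{a,b+d-c}=\sum_{j=0}^{k} S(X)^{i_1,\ldots,i_j}_{a,b}\, S(Y)^{i_{j+1},\ldots,i_k}_{c,d},
\end{equation}
one for each multi-index $(i_1,\ldots,i_k)$ and each $k\ge 0$, which is what I would prove.

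I would prove \eqref{eqn:chen_coord} by induction on $k$, with the inductive statement quantified over all pairs of paths for which the concatenation is defined, since the step will apply the hypothesis to $X$ concatenated with an arbitrary initial segment of $Y$ rather than to the fixed pair $(X,Y)$. The case $k=0$ is immediate as both sides equal $1$. For the inductive step, write $Z:=X*Y$ and apply the recursive relation for iterated integrals to peel off the outermost integral, splitting it at the concatenation time $b$ (where $Z$ switches from following $X$ to following $Y$):
\begin{equation*}
S(Z)^{i_1,\ldots,i_k}_{a,b+d-c}=\int_a^{b} S(Z)^{i_1,\ldots,i_{k-1}}_{a,u}\,\mathrm{d}Z^{i_k}_u+\int_b^{b+d-c} S(Z)^{i_1,\ldots,i_{k-1}}_{a,u}\,\mathrm{d}Z^{i_k}_u.
\end{equation*}
On $[a,b]$ the path $Z$ coincides with $X$, so the first integral is exactly $S(X)^{i_1,\ldots,i_k}_{a,b}$, which is the $j=k$ term. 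For the second integral, the restriction of $Z$ to $[a,u]$ is itself the concatenation of $X$ with the initial portion of $Y$; applying the inductive hypothesis at level $k-1$ expands $S(Z)^{i_1,\ldots,i_{k-1}}_{a,u}$ as a sum over $j=0,\ldots,k-1$. After the change of variable $v=u+c-b$ identifying the $Y$-part of $Z$ with $Y$ itself, each summand integrates by the recursive relation to $S(X)^{i_1,\ldots,i_j}_{a,b}\,S(Y)^{i_{j+1},\ldots,i_k}_{c,d}$, producing precisely the terms $j=0,\ldots,k-1$. Collecting the two pieces yields \eqref{eqn:chen_coord}.

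The point I expect to be most delicate is the quantification of the induction hypothesis just noted: because the outer integral over the $Y$-segment forces the hypothesis to be invoked for $X$ concatenated with every initial segment of $Y$, the identity must be carried through simultaneously for all such pairs and all upper limits, not merely for the single endpoint $b+d-c$. Beyond that, keeping the reparametrisation $v=u+c-b$ consistent with $\mathrm{d}Z^{i_k}_u=\mathrm{d}Y^{i_k}_v$ on the second interval, and observing that the single instant $u=b$ contributes nothing, are the only other points needing care. An alternative route that sidesteps the induction is to prove \eqref{eqn:chen_coord} directly from the definition \eqref{eqn:iterated_integral}: write the left-hand side as an integral over the ordered simplex $a<t_1<\cdots<t_k<b+d-c$ and partition the simplex according to the number $j$ of times $t_1,\ldots,t_j$ falling in $[a,b]$, with the remaining times in $(b,b+d-c]$; each region factors as a product of an $X$-simplex integral and a $Y$-simplex integral, the diagonal set $\{t_i=b\}$ having measure zero, which reproduces the right-hand side term by term. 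Either approach reduces the theorem to elementary manipulations, so no genuinely hard analytic estimate is involved.
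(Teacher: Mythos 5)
The paper gives no proof of this theorem: Chen's identity is stated as a classical result, cited to \cite{chen1958chen} and \cite{lyons2002control}, and the appendix proves only Theorems \ref{thm: preserving}, \ref{thm:general2} and \ref{thm:general1}. There is therefore no in-paper argument to compare against; judged on its own, your proof is correct and follows one of the two standard routes in the literature. The reduction of \eqref{eqn:chen} to the coordinatewise deconcatenation identity is exactly right, although invoking group-likeness there is unnecessary: the coefficient of $e_{i_1}\cdots e_{i_k}$ in the product of \emph{any} two series of the form \eqref{eqn:power_rep} is the sum over splittings, simply because monomials multiply by concatenation. Your induction is sound, and you correctly isolate the one genuine subtlety --- the hypothesis must be carried for $X$ concatenated with every initial segment of $Y$ (equivalently, for all upper integration limits in the $Y$-piece), since that is what the inner expansion requires; the change of variables and the measure-zero role of the junction instant are handled properly. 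The alternative simplex-partition argument you sketch is equally valid and requires no induction. One detail worth flagging: the matching hypothesis as printed in the theorem (and in Definition \ref{def:con1}), namely $\mathbf{I}(X)=\mathbf{T}(Y)$, is evidently a typo for $\mathbf{T}(X)=\mathbf{I}(Y)$, which is what makes $X*Y$ continuous at the junction time $b$; your proof implicitly uses this corrected condition when you split the outer integral at $u=b$ and discard the contribution of that single instant.
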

Thus the multiplicative property of the signature of a path is preserved under concatenation and the signature of the entire path can be captured by calculating the signatures of its pieces. By now the signature map S is revealed as a homomorphism of the monoid of paths (or path segments) with concatenation into the tensor algebra.

Theorem \ref{thm:chen} also leads to the fact that a bounded variation path which is completely cancelled out by itself has null effect on its increments, i.e., $S(X*\stackrel{\leftarrow}{X})=(1,0,0,\ldots).$
\subsection{Signatures as features}
The rough path theory shows that the solution of a controlled system driven by path $X$ is uniquely determined by its signature and the initial condition \cite{lyons2007differential}.
For a path of finite length, the corresponding signature is the fundamental representation that captures its effect on any nonlinear system and ensures that the effects of paths or streams can be locally approximated by linear combinations of signature elements.
Therefore, the coordinate iterated integrals, or the signature in total, are a natural feature set for capturing the aspects of the data that predict the effects of the path on a controlled system. 
The signature can remove the infinite dimensional redundant information caused by time reparameterisation while retaining the information on the order of events \cite{lyons2007differential}. Moreover, the signature of a path, as a feature set, has the advantages of being able to handle time of variable length, unequal spacing and missing data in a unified way \cite{liao2019learning}  \cite{wu2020mentalhealth}.

On the other hand, the signature map $S$ is not one-to-one, but its kernel is well understood. Two distinct paths can have exactly the same signature. For example, they are the same under time reparametersation. To characterise those paths with the same signature, we use a \emph{geometric} relation $\sim$ called \emph{tree-like equivalence} on paths of finite length \cite{hambly2010uniqueness}.

In order to visualise tree-like equivalence, in Figure \ref{fig:threelike} we exhibit two 2 dimensional paths which are tree-like equivalent as an example. It can be seen that both of the curves have the same shape expect that the right one has some "new part" that is completely self-cancelling. 

\begin{figure}[h!]
  \centering
\begin{subfigure}[b]{0.48\linewidth}
\includegraphics[trim=3cm 16.5cm 4.5cm 3.5cm, clip,width=1.0\textwidth]{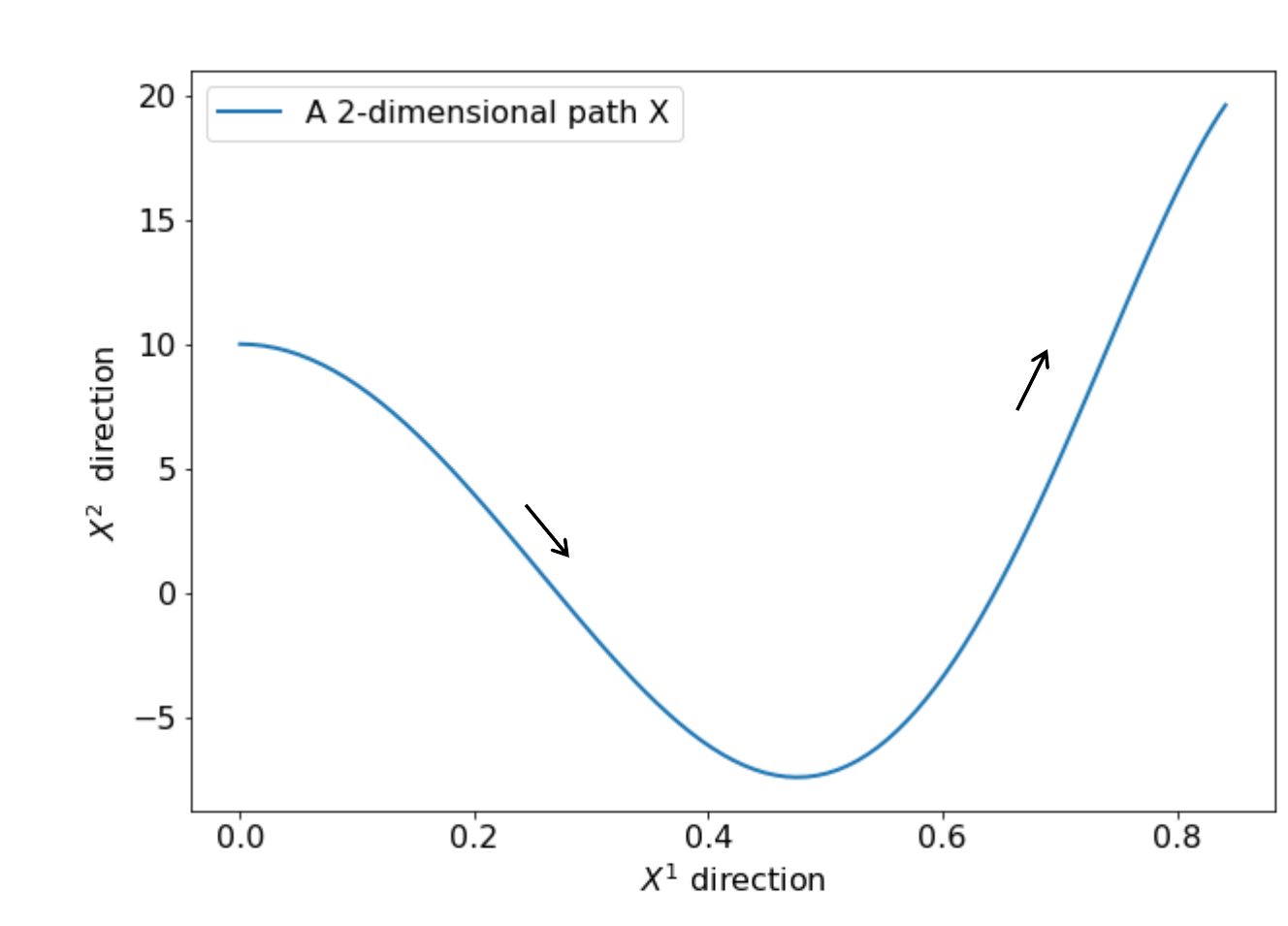}
\caption{A 2-dimensional path $X$.}
\end{subfigure}
\begin{subfigure}[b]{0.48\linewidth}
\includegraphics[trim=3cm 16.5cm 4.5cm 3cm, clip,width=1.0\textwidth]{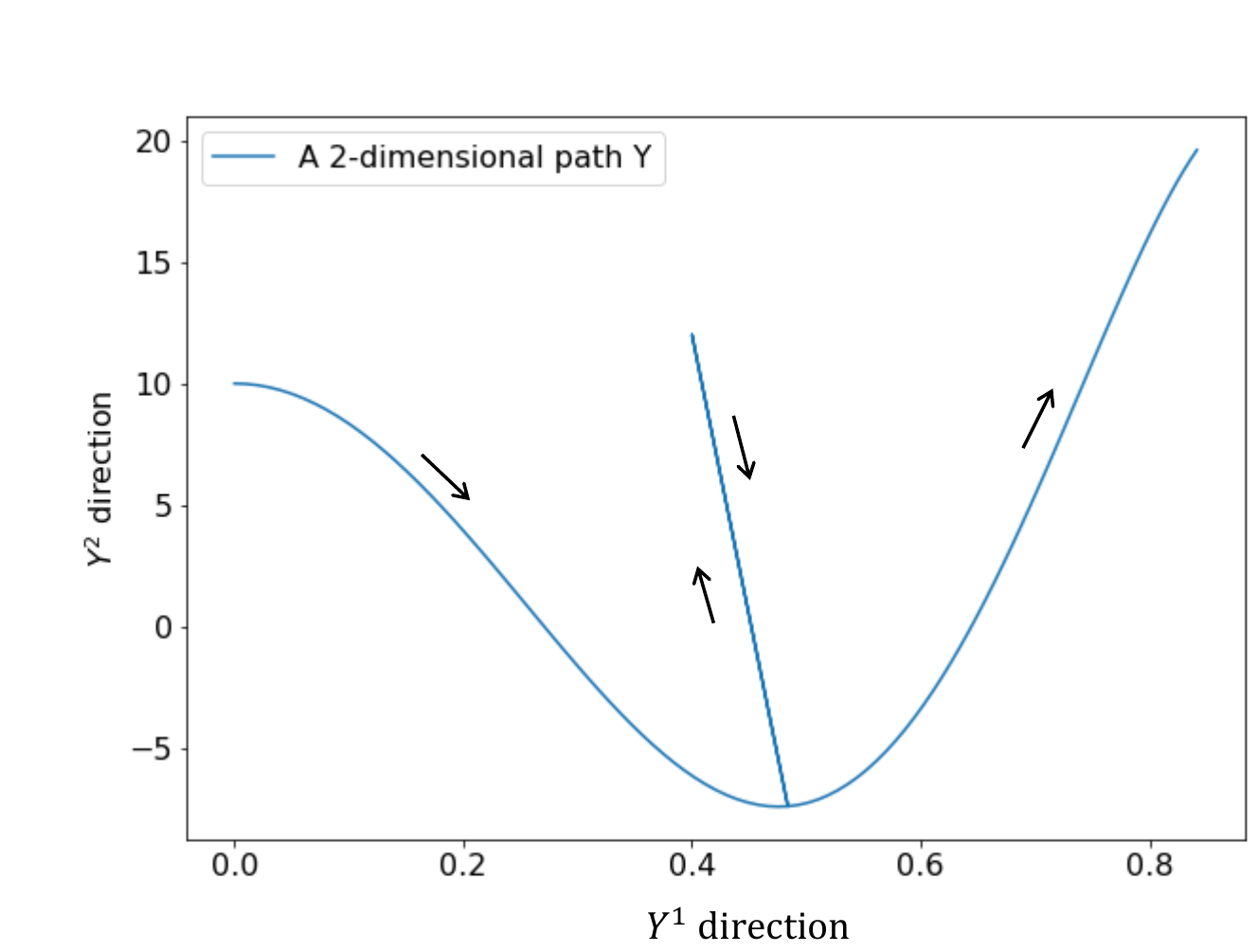}
\caption{A 2-dimensional path that is tree-like equivalent with $X$.}
\end{subfigure}
\caption{An illustration for tree-like equivalence: the left plot is a 2-dimensional curve $X$ on time range $[0,1]$, where the $X^1(t)=\sin(t)$, and $X^2(t)=t^2+\cos(10t)$; the right plot is a 2-dimensional curve $Y$ on the same time range, with $Y^1(t)=\sin(2t)$, $Y^2(t)=4t^2+\cos(20t)$ for $t\in [0,0.25]$, , $Y^1(t)=k_1(t-0.25)+\sin(0.5)$, $Y^2(t)=k_2(t-0.25)+0.25+\cos(5)$ for $t\in [0.25,0.5]$, $Y^1(t)=-k_1(t-0.5)+0.4$, $Y^2(t)=-k_2(t-0.5)+12$ for $t\in [0.5,0.75]$, $Y^1(t)=\sin(2t-1)$, $Y^1(t)=(2t-1)^2+\cos(20t-10)$ for $t\in [0.75,1]$, where $k_1=1.6-4\sin(0.5)$, and $k_2=47-4\cos(5)$.}
\label{fig:threelike}
\end{figure} 
Hambly-Lyons \cite{hambly2010uniqueness} shows that $X\sim Y$ is an equivalence relation, and the equivalence classes form a group under concatenation.  Hambly-Lyons also shows the tree-like property can be captured by checking the corresponding signature:
\begin{theorem}\cite{hambly2010uniqueness}\label{thm:sig_tree}
Given two $\mathbb{R}^d$-valued continuous parameterised paths $X$ and $Y$ with finite length such that $\mathbf{I}(X)=\mathbf{I}(Y)$ and $\mathbf{T}(Y)=\mathbf{T}(Y)$ . It holds that $X\sim Y$ if and only if $S(X*\stackrel{\leftarrow}{Y})=(1,0,0,\ldots)$. 
\end{theorem}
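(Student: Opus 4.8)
The plan is to reduce the stated biconditional to a single assertion about a loop and then treat its two halves separately. Write $Z := X * \stackrel{\leftarrow}{Y}$. The hypotheses $\mathbf{I}(X)=\mathbf{I}(Y)$ and $\mathbf{T}(X)=\mathbf{T}(Y)$ (the second being what I read the mis-typed endpoint condition to mean) guarantee that the concatenation is well defined, since $\mathbf{I}(\stackrel{\leftarrow}{Y})=\mathbf{T}(Y)=\mathbf{T}(X)$, and that $Z$ is a loop based at $\mathbf{I}(X)=\mathbf{T}(\stackrel{\leftarrow}{Y})$. By the definition of the relation $\sim$, the statement $X\sim Y$ means precisely that $Z$ is tree-like, so the theorem is equivalent to the claim that $Z$ is tree-like if and only if $S(Z)=(1,0,0,\ldots)$.

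First I would record the algebraic reformulation of the right-hand side, which uses only material already in the excerpt. Chen's identity (Theorem~\ref{thm:chen}) gives $S(Z)=S(X)\otimes S(\stackrel{\leftarrow}{Y})$. Applying Chen's identity to the self-cancelling loop $Y*\stackrel{\leftarrow}{Y}$ together with the fact $S(Y*\stackrel{\leftarrow}{Y})=(1,0,0,\ldots)$ shows that $S(\stackrel{\leftarrow}{Y})$ is the two-sided inverse of $S(Y)$ among the group-like elements of the tensor algebra. Hence $S(Z)=(1,0,0,\ldots)$ if and only if $S(X)=S(Y)$, and the theorem can equivalently be phrased as the uniqueness statement $X\sim Y \iff S(X)=S(Y)$.

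For the easy (``only if'') direction, suppose $Z$ is tree-like; intuitively a tree-like loop retraces itself and so its signature should cancel. I would make this precise by using the height function associated to $Z$ to approximate it by a finite concatenation of ``out-and-back'' excursions $\gamma_i * \stackrel{\leftarrow}{\gamma_i}$. Each such excursion has trivial signature by the cancellation property $S(\gamma_i * \stackrel{\leftarrow}{\gamma_i})=(1,0,0,\ldots)$, and Chen's identity multiplies these trivial factors to the unit; passing to the limit via continuity of the signature in the length (variation) of the approximants then yields $S(Z)=(1,0,0,\ldots)$.

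The hard (``if'') direction is the main obstacle and is exactly the uniqueness theorem of Hambly--Lyons~\cite{hambly2010uniqueness}: a finite-length loop with trivial signature must be tree-like. No purely algebraic argument from Chen's identity and the cancellation property can deliver this, because the single constraint $S(Z)=(1,0,0,\ldots)$ is being shown to force a rigid geometric structure on $Z$. The standard route is to develop the path into a sufficiently non-abelian group, for instance the isometry group of hyperbolic space, and to argue that a non-tree-like loop would produce non-trivial holonomy under this development and hence a non-vanishing iterated integral; equivalently one passes to the tree reduction of $Z$ and shows it must be constant. I would cite this result rather than reprove it, since its proof requires machinery well beyond the tools available in the excerpt. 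Combining the two directions with the algebraic reformulation of paragraph two then completes the argument.
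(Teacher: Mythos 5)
The paper does not actually prove this statement anywhere --- it is quoted directly from Hambly--Lyons \cite{hambly2010uniqueness}, and the appendix contains proofs only of Theorems \ref{thm: preserving}, \ref{thm:general2} and \ref{thm:general1} --- so the benchmark here is the citation itself. Your proposal is correct and ultimately rests on that same citation: the reduction of $X\sim Y$ to triviality of $S(X*\stackrel{\leftarrow}{Y})$ via Chen's identity and the invertibility of $S(\stackrel{\leftarrow}{Y})$, the approximation sketch for the tree-like-implies-trivial-signature direction, and the deferral of the hard converse to the Hambly--Lyons uniqueness theorem are all sound, so this amounts to the paper's own treatment with the surrounding scaffolding made explicit.
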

Theorem \ref{thm:sig_tree} ensures that the tree-like equivalence can be uniquely characterised by its signature. Put in other words, the increments of paths under tree-like equivalence have the same effects. It can be shown that a time-augmented path can be uniquely determined by and recovered from its signature \cite{levin2013past}. So there are no other path which has the same increment effects as this time-augmented path.

\section{The visibility transformation}\label{sec:visibility}
This section is devoted to the introduction of the visibility transformation. Section \ref{sec:visibility1} formulates the visibility transformation for a continuous path and Section \ref{sec:visibility2} considers the discrete version for practical use.
\subsection{The continuous path of finite length}\label{sec:visibility1}
To assist the understanding towards the visibility transformation, we first define the \emph{visibility plane} in $\mathbb{R}^{d+1}$ as
$\{[z_1,\cdots,z_d,1],z_j\in \mathbb{R}\ \ \text{for}\  j \in [d]\}$, and the \emph{invisibility plane} as 
$\{[z_1,\cdots,z_d,0],z_j\in \mathbb{R}\ \ \text{for}\  j \in [d]\}$. Without loss of generality, the time range is always assumed to be $[0,1]$ within this subsection. 
\begin{definition}\label{def:vt}
Given a continuous path $X:[0,1]\to \mathbb{R}^d$. The \emph{initial-position-incorporated visibility transformation} ($\mathbf{I}$-visibility transformation) $\gamma_{\mathbf{I}}$ maps the path $X$ to a $\mathbb{R}^{d+1}$ valued path starting at the origin,  where the path is determined by the continuous function $f_{\mathbf{I}(X)}*L_X$ with
{\small  \begin{equation}
f_{\mathbf{I}(X)}(t)=f_{X_0}(t):=[\tau(t)X^1_0,\cdots,\tau(t)X^d_0,\iota(t)]
\end{equation}}
and
{\small \begin{equation}
L_X(t):=[X^1_{t},\cdots,X^d_{t},1],
\end{equation}}
for $t\in [0,1]$, where $\tau(t):=\min(2t,1)$ and $\iota(t):=\max(2t-1,0)$. Similarly the \emph{tail-position-incorporated visibility transformation} ($\mathbf{T}$-visibility transformation) $\gamma_{\mathbf{T}}$ maps the path $X$ into a $\mathbb{R}^{d+1}$ valued path starting at the origin, where the path is determined by the continuous function $L_X*\stackrel{\leftarrow}{f_{\mathbf{T}(X)}}$, where  for $t\in [0,1]$,
{\small \begin{equation}
f_{\mathbf{T}(X)}=f_{X_1}(t):=[\tau(1-t)X_1^1,\ldots,\tau(1-t)X_1^d,\iota(1-t)].
 \end{equation}}
\end{definition}
In the definition of the \emph{$\mathbf{I}$-visibility transformation}, we construct two $\mathbb{R}^{d+1}$ segments from path $X$, and join them together using concatenation. The new path starts from the origin on the invisibility plane, and moves  to the initial position of path $X$ on the invisibility plane, i.e., $[X_0^1,\ldots,X_0^d, 0]$; then the path is made visible by being lifted onto the visibility plane, i.e., $[X_t^1,\ldots,X_t^d,1]$ for $t\in [0,1]$. By contrast, for the \emph{$\mathbf{T}$-visibility transformation}, the path is visible first and then invisible. Figure \ref{fig:loop} shows how a 2-dimensional path can be extended to a 3-dimensional path with invisibility and visibility information.
For simplicity, $f_{X_0}$, $L_X$ and $\stackrel{\leftarrow}{f_{X_1}}$ are defined on $[0,1]$. Indeed, the speed moving along the path does not affect its shape. 

\begin{figure}[h!]
  \centering
\begin{subfigure}[b]{0.48\linewidth}
\includegraphics[trim=3cm 14cm 4.5cm 3.5cm, clip,width=2.5in]{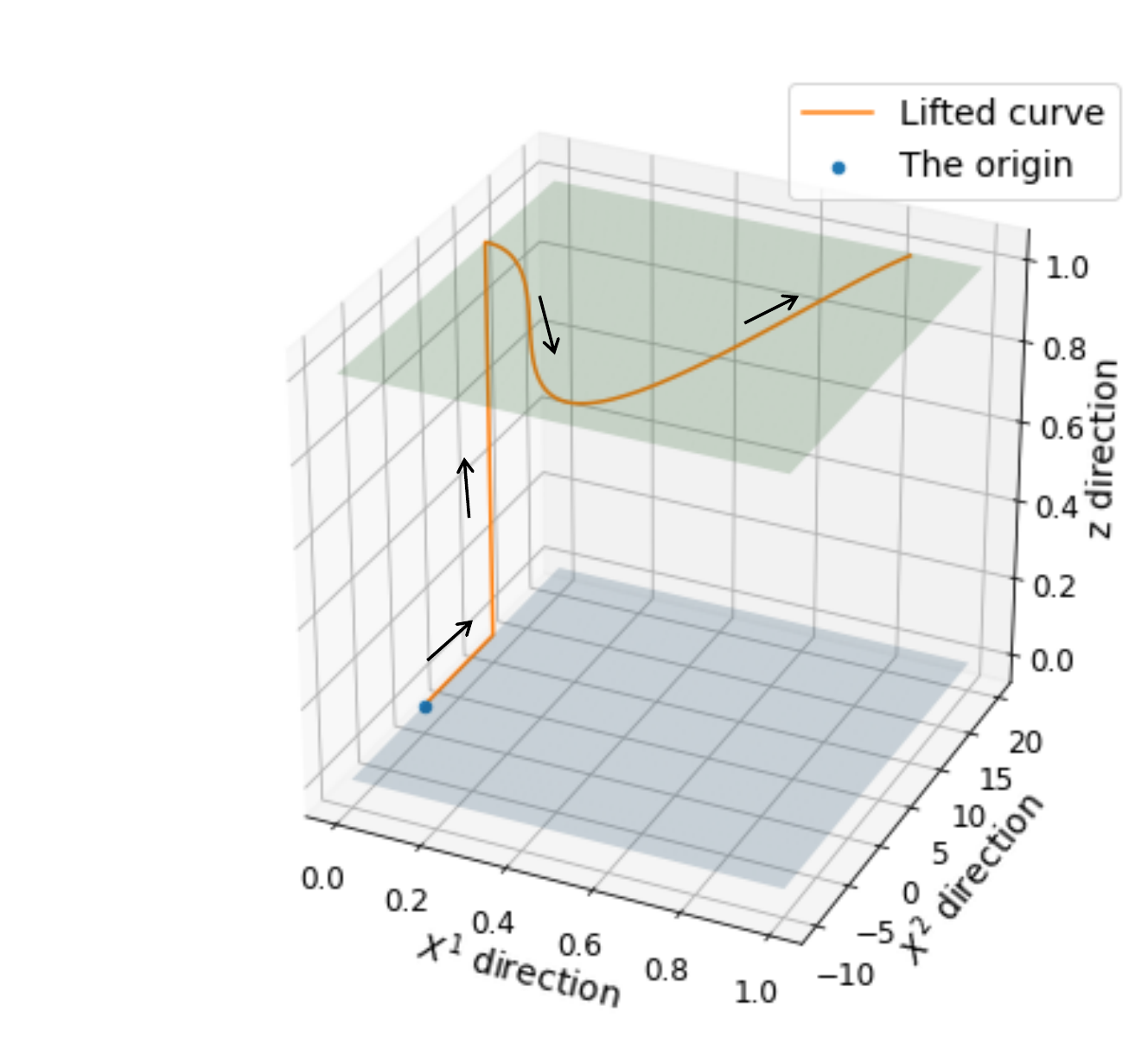}
\caption{The path after the $\mathbf{I}$-visibility transformation.}
\end{subfigure}
\begin{subfigure}[b]{0.48\linewidth}
\includegraphics[trim=3cm 14cm 4.5cm 3.5cm, clip, width=2.5in]{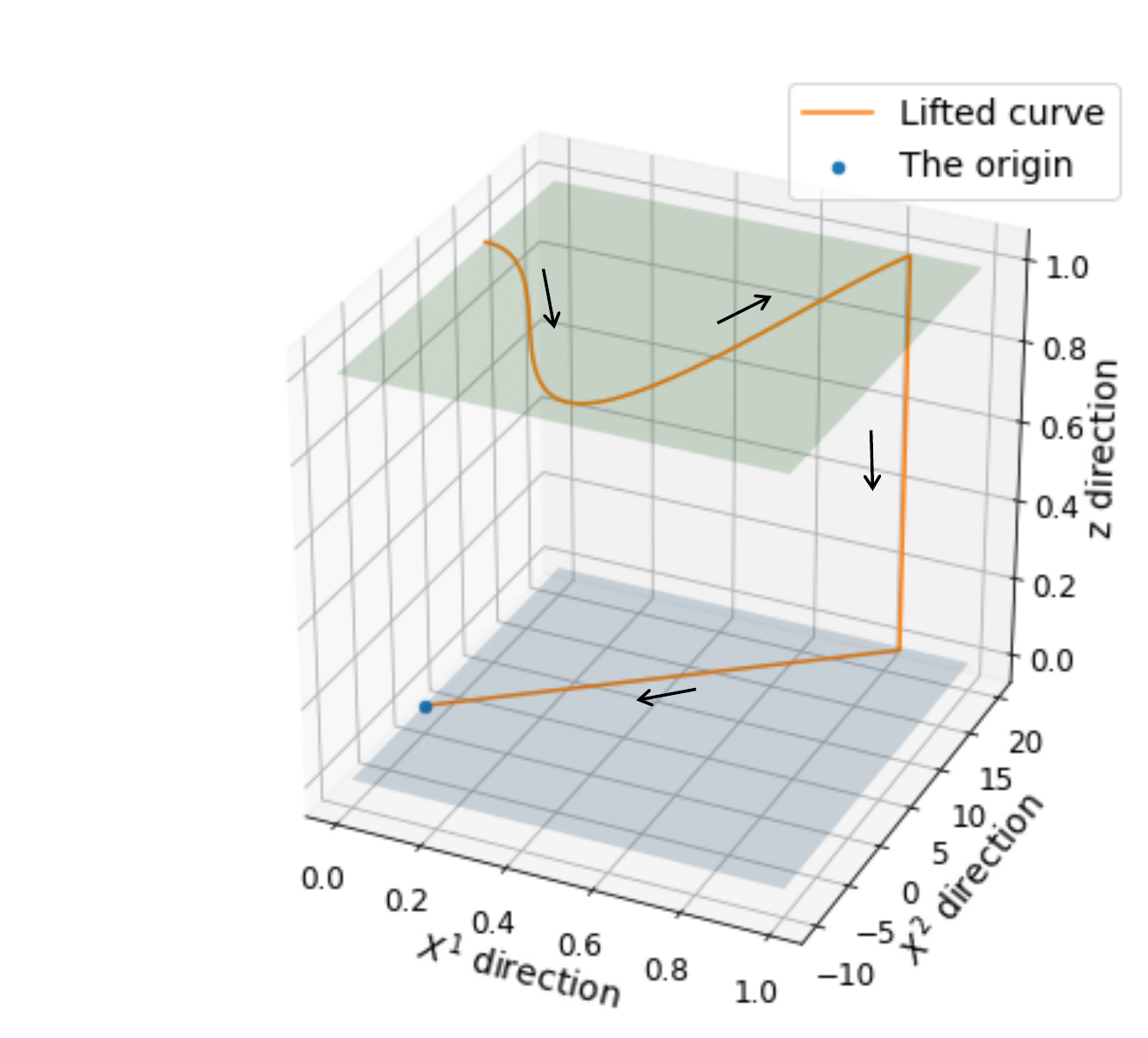}
\caption{The path after the $\mathbf{T}$-visibility transformation.}
\end{subfigure}
\caption{An illustration for visibility transformations on path $X$ from Figure \ref{fig:threelike}:
the left plot shows that the $\mathbf{I}$-visibility transformation transforms the 2-dimensional path to a 3-dimensional curve by joining the origin to the initial position of the original curve on the invisibility plane (z=0, the light blue plane) and lifting the 2-dimensional curve to the visibility plane (z=1, the light green plane); the right plot shows that the $\mathbf{T}$-visibility transformation by first lifting the 2-dimensional curve to the visibility plane and joining the tial positioon of the original curve to the origin.}
\label{fig:loop}
\end{figure} 

Figure \ref{fig:loop} illustrates that the structure of the path remains though being lifted to a higher dimensional space. Indeed, the visibility transformation preserves tree-like equivalence.
\begin{theorem}\label{thm: preserving}
Let $X$,$Y$ be continuous paths of finite length with $X \sim Y$. Then for the
${\mathbf{I}}$-visibility transformation we have 
$\gamma_{\mathbf{I}}(X)\sim \gamma_{\mathbf{I}}(Y)$ and $S(\gamma_{\mathbf{I}}(X))=S(\gamma_{\mathbf{I}}(Y))$. Similarly, for the ${\textbf{T}}$-visibility transformation, 
$\gamma_{\mathbf{T}}(X)\sim \gamma_{\mathbf{T}}(Y)$ and $S(\gamma_{\mathbf{T}}(X))=S(\gamma_{\mathbf{T}}(Y))$.
\end{theorem}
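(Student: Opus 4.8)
The plan is to reduce both assertions to a single signature identity and then invoke Theorem~\ref{thm:sig_tree}. First I would record that tree-like equivalence $X\sim Y$ forces $\mathbf{I}(X)=\mathbf{I}(Y)$ and $\mathbf{T}(X)=\mathbf{T}(Y)$, since by Theorem~\ref{thm:sig_tree} the path $X*\stackrel{\leftarrow}{Y}$ must be a loop. Consequently the auxiliary segment $f_{\mathbf{I}(X)}$, which depends on $X$ only through its initial point $\mathbf{I}(X)$, coincides with $f_{\mathbf{I}(Y)}$; call this common segment $f$. By Definition~\ref{def:vt}, $\gamma_{\mathbf{I}}(X)=f*L_X$ and $\gamma_{\mathbf{I}}(Y)=f*L_Y$, where $L_X,L_Y$ are the lifts of $X,Y$ to the visibility plane. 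Chen's identity (Theorem~\ref{thm:chen}) then gives $S(\gamma_{\mathbf{I}}(X))=S(f)\otimes S(L_X)$ and $S(\gamma_{\mathbf{I}}(Y))=S(f)\otimes S(L_Y)$, so it suffices to prove $S(L_X)=S(L_Y)$: the common left factor $S(f)$ is invertible in the tensor algebra---its inverse being $S(\stackrel{\leftarrow}{f})$ by the cancellation identity $S(f*\stackrel{\leftarrow}{f})=(1,0,0,\dots)$ recorded after Theorem~\ref{thm:chen}---and may therefore be cancelled.

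Next I would establish the two ingredients behind $S(L_X)=S(L_Y)$. The first is that $X\sim Y$ implies $S(X)=S(Y)$: by Theorem~\ref{thm:sig_tree}, $X\sim Y$ is equivalent to $S(X*\stackrel{\leftarrow}{Y})=(1,0,0,\dots)$, while Chen's identity together with the cancellation identity $S(Y*\stackrel{\leftarrow}{Y})=(1,0,0,\dots)$ shows $S(\stackrel{\leftarrow}{Y})=S(Y)^{-1}$; hence $S(X)\otimes S(Y)^{-1}=(1,0,0,\dots)$, that is $S(X)=S(Y)$. The second is that the lift $L_X(t)=[X^1_t,\dots,X^d_t,1]$ merely appends a constant $(d+1)$-th coordinate, so $\mathrm{d}L_X^{d+1}\equiv 0$: every iterated integral of $L_X$ whose multi-index contains the index $d+1$ vanishes, while the remaining iterated integrals over indices in $[d]$ are literally those of $X$. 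Thus $S(L_X)$ is completely determined by $S(X)$ (its $[d]$-indexed entries agree with those of $S(X)$ and every entry touching $d+1$ is zero), and likewise for $Y$; combined with $S(X)=S(Y)$ this yields $S(L_X)=S(L_Y)$.

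Putting these together proves $S(\gamma_{\mathbf{I}}(X))=S(\gamma_{\mathbf{I}}(Y))$. The tree-like equivalence $\gamma_{\mathbf{I}}(X)\sim\gamma_{\mathbf{I}}(Y)$ then follows from Theorem~\ref{thm:sig_tree}: both transformed paths share the same initial point (the origin) and the same tail point $[\,\mathbf{T}(X),1\,]=[\,\mathbf{T}(Y),1\,]$, so that theorem applies, and $S(\gamma_{\mathbf{I}}(X)*\stackrel{\leftarrow}{\gamma_{\mathbf{I}}(Y)})=S(\gamma_{\mathbf{I}}(X))\otimes S(\gamma_{\mathbf{I}}(Y))^{-1}=(1,0,0,\dots)$ gives the equivalence at once. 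The $\mathbf{T}$-visibility case is entirely symmetric: here $\gamma_{\mathbf{T}}(X)=L_X*\stackrel{\leftarrow}{f_{\mathbf{T}(X)}}$, and the auxiliary segment now depends on $X$ only through $\mathbf{T}(X)=\mathbf{T}(Y)$, so it is again common to both; applying Chen's identity with $L_X$ as the left factor and cancelling the common right factor reduces once more to the already-established $S(L_X)=S(L_Y)$.

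I expect the main obstacle to be bookkeeping rather than any deep idea: one must verify carefully that appending the constant coordinate leaves the $[d]$-indexed iterated integrals unchanged while annihilating every index-$(d+1)$ term, and one must justify cancelling the common factor by appealing to the invertibility of signatures (group-like elements) in the tensor algebra. A secondary point to confirm is that the endpoints of the transformed paths genuinely coincide, which is exactly what licenses the final appeal to Theorem~\ref{thm:sig_tree}.
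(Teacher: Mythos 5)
Your proof is correct, but it runs in the opposite direction to the paper's and leans on different machinery. The paper stays at the geometric level: from $X\sim Y$ it notes $L_X\sim L_Y$ and, since tree-like equivalent paths share endpoints, $f_{\mathbf{I}(X)}=f_{\mathbf{I}(Y)}$; it then cites the Hambly--Lyons fact that concatenation respects $\sim$ to conclude $\gamma_{\mathbf{I}}(X)=f_{\mathbf{I}(X)}*L_X\sim f_{\mathbf{I}(Y)}*L_Y=\gamma_{\mathbf{I}}(Y)$ outright, the signature equality being an immediate consequence. You instead work in the tensor algebra: Chen's identity (Theorem~\ref{thm:chen}) factorises $S(\gamma_{\mathbf{I}}(X))=S(f)\otimes S(L_X)$, the observation that the lift $L_X$ annihilates every index-$(d+1)$ iterated integral while preserving the $[d]$-indexed ones shows that $S(L_X)$ is determined by $S(X)$, and $X\sim Y\Rightarrow S(X)=S(Y)$ then yields the signature identity first; the equivalence $\gamma_{\mathbf{I}}(X)\sim\gamma_{\mathbf{I}}(Y)$ is only recovered at the end from Theorem~\ref{thm:sig_tree}. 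Each route buys something: your factorisation $S(f)\otimes S(L_X)$ is precisely the mechanism behind Theorem~\ref{thm:general2}, so your argument makes the algebra explicit and reusable; the paper's argument is softer, needing only that $\sim$ is respected by concatenation and fixes endpoints, whereas your final step invokes the deep (uniqueness) direction of Hambly--Lyons to convert the triviality of $S\bigl(\gamma_{\mathbf{I}}(X)*\stackrel{\leftarrow}{\gamma_{\mathbf{I}}(Y)}\bigr)$ back into tree-like equivalence.

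Two small repairs, neither fatal. First, deducing $\mathbf{I}(X)=\mathbf{I}(Y)$ and $\mathbf{T}(X)=\mathbf{T}(Y)$ ``by Theorem~\ref{thm:sig_tree}'' is mildly circular, since that theorem takes matching endpoints as a hypothesis; the fact follows instead from the definition of $\sim$: for $X*\stackrel{\leftarrow}{Y}$ to be defined and tree-like (hence a loop), the endpoints must agree. This is the same fact the paper asserts directly. Second, the invertibility of $S(f)$ is not actually needed for your reduction: from $S(L_X)=S(L_Y)$ you get $S(f)\otimes S(L_X)=S(f)\otimes S(L_Y)$ simply by tensoring on the left; cancellation of the group-like element $S(f)$ would matter only for the converse implication, which you never use.
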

 In the following, we consider an ordered multi-index collection $J=(j_1,\ldots,j_{|J|})$ with $j_i \in [d+1]$ for $i\in [|J|]$, where $|\cdot|$ denotes the number of elements in a set. An application of Chen's identity leads to the following decomposition of the signature after the visibility transformation, showing that some terms in the signature of the path generated by the visibility transformation can be expressed in terms of the signature of the original path, i.e., $S(X)$, and the signature of the path $f_{\mathbf{I}(X)}$ (resp. $f_{\mathbf{T}(X)}$), i.e., $S(f_{\mathbf{I}(X)})$ (resp. $S(f_{\mathbf{T}(X)})$). 
   \begin{theorem}\label{thm:general2}
Let $X$ be a continuous path of finite length. For a multi-index collection $J$
{\small \begin{equation}\label{eqn:combination1}
 S(\gamma_{\mathbf{I}}(X))^J=\sum_{ \mathrel{\substack{(J_1| J_2)  =J\\ d+1\notin J_2} }} S_{f_{\mathbf{I}(X)}}^{J_1}S_{X}^{J_2}e_{m_1}\cdots e_{m_{|J_1|}} e_{h_1}\cdots e_{h_{|J_2|}},
\end{equation}}
and
{\small \begin{equation}\label{eqn:combination2}
 S(\gamma_{\mathbf{T}}(X))^J=\sum_{ \mathrel{\substack{(J_1| J_2)  =J\\ d+1\notin J_1} }} S_{X}^{J_1}S_{f_{\mathbf{T}(X)}}^{J_2}e_{m_1}\cdots e_{m_{|J_1|}} e_{h_1}\cdots e_{h_{|J_2|}}.
\end{equation}}
Here $J_1:=(m_1,\cdots,m_{|J_1|})$, $J_2:=(h_1,\cdots,h_{|J_2|})$ are multi-index collections, $(J_1| J_2)$ is a new multi-index collection in which $J_2$ is appended to $J_1$, and $S_X^{J_1}$,$S_{f_{\mathbf{I}(X)}}^{J_1}$ and $S_{f_{\mathbf{T}(X)}}^{J_2}$ are the corresponding coefficients of $S(X)$, $S(f_{\mathbf{I}(X)})$ and $S(f_{\mathbf{T}(X)})$.
\end{theorem}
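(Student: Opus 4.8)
The plan is to reduce both identities to a single application of Chen's identity (Theorem \ref{thm:chen}), after recording how the signature of the lifted curve $L_X$ relates to that of $X$. First I would write the transformed paths explicitly as two-segment concatenations: $\gamma_{\mathbf{I}}(X) = f_{\mathbf{I}(X)} * L_X$, and in the tail case $\gamma_{\mathbf{T}}(X)$ as the concatenation of the lifted curve $L_X$ with the tail segment whose signature supplies the coefficients $S_{f_{\mathbf{T}(X)}}^{\cdot}$. One should first verify that the endpoints match so that the concatenation is well defined, i.e. $\mathbf{T}(f_{\mathbf{I}(X)}) = [X_0^1,\ldots,X_0^d,1] = \mathbf{I}(L_X)$, and symmetrically at the tail. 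Chen's identity then factorises the signatures as $S(\gamma_{\mathbf{I}}(X)) = S(f_{\mathbf{I}(X)}) \otimes S(L_X)$ and $S(\gamma_{\mathbf{T}}(X)) = S(L_X) \otimes S(f_{\mathbf{T}(X)})$.

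The key observation, and the one step that carries genuine content rather than bookkeeping, is that the last coordinate of $L_X$ is constantly equal to $1$, so that $\mathrm{d}L_X^{d+1} \equiv 0$ while $\mathrm{d}L_X^i = \mathrm{d}X^i$ for $i \in [d]$. Substituting this into the definition \eqref{eqn:iterated_integral} of the iterated integral shows directly that $S(L_X)^{K}$ vanishes whenever the multi-index $K$ contains the letter $d+1$, and equals $S(X)^{K}$ whenever it does not. This is precisely what converts the abstract factor $S(L_X)$ into the concrete coefficients $S_X^{\cdot}$ appearing in the statement, and it is the source of the index restriction $d+1 \notin J_2$ (resp. $d+1 \notin J_1$).

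Finally I would extract the coefficient of the monomial $e_J$ from each tensor product. Using the representation \eqref{eqn:power_rep}, the product of two signatures $\left(\sum_{P} S_A^{P} e_P\right)\left(\sum_{Q} S_B^{Q} e_Q\right)$ satisfies $e_P e_Q = e_{(P|Q)}$, so the coefficient of $e_J$ is the sum of $S_A^{J_1} S_B^{J_2}$ over all prefix/suffix splits $(J_1|J_2) = J$. For $\gamma_{\mathbf{I}}$ the second factor is $L_X$, so the vanishing observation restricts the sum to splits with $d+1 \notin J_2$ and replaces $S(L_X)^{J_2}$ by $S_X^{J_2}$, which is exactly \eqref{eqn:combination1}; for $\gamma_{\mathbf{T}}$ the lifted curve is the first factor, so the restriction falls on $J_1$ instead and gives \eqref{eqn:combination2}.

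I expect the main obstacle to be organisational rather than analytical: the constant-coordinate reduction of $S(L_X)$ is short once isolated, and the real care is needed in matching the order of the two concatenation factors to the side on which the constraint $d+1 \notin J_2$ (resp.\ $J_1$) appears. In particular, in the tail case one must keep track of the orientation of the tail segment so that the factor contributing to the second block is genuinely $S(f_{\mathbf{T}(X)})$ and not the signature of its reversal; getting this orientation right is what fixes the asymmetry between \eqref{eqn:combination1} and \eqref{eqn:combination2}.
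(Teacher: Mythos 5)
Your proposal is correct and follows essentially the same route as the paper's proof: the key reduction $S(L_X)^K = S(X)^K$ when $d+1\notin K$ and $S(L_X)^K = 0$ when $d+1\in K$ (which you justify cleanly via $\mathrm{d}L_X^{d+1}\equiv 0$), followed by Chen's identity and coefficient extraction from the tensor product of the power-series representation. Your added care about the orientation of the tail segment is sensible (the paper's definition of $f_{\mathbf{T}(X)}$ and its reversal is notationally inconsistent there), but it does not change the substance of the argument.
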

Recall that $f_{\mathbf{I}(X)}$ (resp. $f_{\mathbf{T}(X)}$) is only related to the initial (resp. tail) position. These terms discussed in Theorem \ref{thm:general2} indeed capture the mixed effects of both the increments and the initial (resp. tail) position of the original path.  Theorem \ref{thm:general2} also leads to the following consequence.
\begin{corollary}\label{cor:simple}
Let $X$ be a continuous path of finite length. For $j\in [d]$, we have
{\small \begin{equation}
S(\gamma_{\mathbf{I}}(X))^j=X_1^je_j\ \ \text{and\ \ } S(\gamma_{\mathbf{T}}(X))^j=-X_0^je_j.
\end{equation}}
\end{corollary}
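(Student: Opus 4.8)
The plan is to specialise Theorem \ref{thm:general2} to the single-letter multi-index $J=(j)$ with $j\in[d]$ and to read off the coefficient of the monomial $e_j$. The first observation is that the defining constraint of each sum is vacuous here: since $j\neq d+1$, every admissible splitting $(J_1\mid J_2)=(j)$ automatically satisfies $d+1\notin J_2$ (resp.\ $d+1\notin J_1$), so no summand is discarded. The second observation is combinatorial: a one-letter word can be cut into an ordered pair of sub-words in exactly two ways, namely $J_1$ empty with $J_2=(j)$, or $J_1=(j)$ with $J_2$ empty. Hence both \eqref{eqn:combination1} and \eqref{eqn:combination2} collapse to a sum of precisely two terms, and each reassembles to the single monomial $e_j$.

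Next I would evaluate the three scalar coefficients that occur. By the $0$th-term convention in Definition \ref{def:signature}, the empty-word coefficient of every signature equals $1$, which handles the factors associated with the empty $J_1$ or $J_2$. The remaining coefficients are first-level signature entries, i.e.\ total coordinate increments of the relevant paths. For the original path this is $S_X^{(j)}=X_1^j-X_0^j$. For the auxiliary path $f_{\mathbf{I}(X)}=f_{X_0}$ of Definition \ref{def:vt}, the $j$th coordinate $\tau(t)X_0^j$ increases from $0$ at $t=0$ to $X_0^j$ at $t=1$, so $S_{f_{\mathbf{I}(X)}}^{(j)}=X_0^j$; for $f_{\mathbf{T}(X)}=f_{X_1}$ the $j$th coordinate $\tau(1-t)X_1^j$ \emph{decreases} from $X_1^j$ at $t=0$ to $0$ at $t=1$, so $S_{f_{\mathbf{T}(X)}}^{(j)}=-X_1^j$.

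Substituting these values into the two-term sums is then expected to finish the argument. For the $\mathbf{I}$-case the coefficient pairs as $S_{f_{\mathbf{I}(X)}}^{J_1}S_X^{J_2}$, giving
\begin{equation}
S(\gamma_{\mathbf{I}}(X))^j=\bigl(1\cdot(X_1^j-X_0^j)+X_0^j\cdot 1\bigr)e_j=X_1^je_j,
\end{equation}
while for the $\mathbf{T}$-case it pairs as $S_X^{J_1}S_{f_{\mathbf{T}(X)}}^{J_2}$, giving
\begin{equation}
S(\gamma_{\mathbf{T}}(X))^j=\bigl(1\cdot(-X_1^j)+(X_1^j-X_0^j)\cdot 1\bigr)e_j=-X_0^je_j.
\end{equation}

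I expect the only genuine subtlety to be bookkeeping rather than depth: one must track the direction in which the auxiliary paths traverse their $j$th coordinate, since it is exactly the sign in $S_{f_{\mathbf{T}(X)}}^{(j)}=-X_1^j$ that cancels the tail contribution $X_1^j$ and leaves the negated initial value $-X_0^j$. As an independent sanity check that avoids Theorem \ref{thm:general2} entirely, one may recall that a first-level signature entry is just the overall increment of that coordinate: consistent with the geometric picture in Figure \ref{fig:loop}, $\gamma_{\mathbf{I}}(X)$ runs from the origin to $[X_1^1,\dots,X_1^d,1]$ and $\gamma_{\mathbf{T}}(X)$ runs from $[X_0^1,\dots,X_0^d,1]$ to the origin, reproducing $X_1^j$ and $-X_0^j$ respectively.
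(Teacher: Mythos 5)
Your proposal is correct and takes essentially the same approach as the paper: the paper gives no standalone proof of Corollary \ref{cor:simple}, stating it only as a direct consequence of Theorem \ref{thm:general2}, and your specialisation to $J=(j)$ --- the vacuous constraint, the two-term splitting, the empty-word coefficients equal to $1$, and the first-level increments $S_X^{(j)}=X_1^j-X_0^j$, $S_{f_{\mathbf{I}(X)}}^{(j)}=X_0^j$, $S_{f_{\mathbf{T}(X)}}^{(j)}=-X_1^j$ --- is exactly that omitted computation spelled out. Your sign bookkeeping in the $\mathbf{T}$-case and the independent sanity check via total coordinate increments are both sound.
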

Clearly $X_1$ (resp. $X_0$) in the formula above is the tail (resp. initial) position value. Corollary \ref{cor:simple} shows the ${\mathbf{I}}$-visibility transformation (resp. the ${\mathbf{T}}$-visibility transformation) trivially captures the linear effect on the tail position (resp. the initial position) of the path. 

Based on the decomposition of the signature after the visibility transformation in Theorem \ref{thm:general2}, we can show that the signature of the lifted path after the ${\mathbf{I}}$-visibility transformation (resp. the ${\mathbf{T}}$-visibility transformation) preserves the effects of the increments of the original path and captures the (nonlinear) effects purely resulted from the initial (resp. tail) position.
\begin{theorem}\label{thm:general1}
Let there be given an $\mathbb{R}^d$-valued continuous path $X$ of finite length and a multi-index collection $J$ with $d+1\notin J$. Define ${J^-}:=(d+1|J)$, where $d+1$ is prefixed to $J$ on the left. Then
{\small \begin{equation}\label{eqn:old_new_equal1}
S(\gamma_{\mathbf{I}}(X))^{J^-}=S_X^J e_{d+1}e_{j_1}\ldots e_{j_{|J|}},
\end{equation}}
where $S_X^J$ is the corresponding coefficient of $S(X)$.
Similarly, define $J^{+}:=(J|d+1)$, where $d+1$ is postfixed to $J$ on the right. Then
{\small \begin{equation}\label{eqn:old_new_equal2}
 S(\gamma_{\mathbf{I}}(X))^{J^+}=\frac{1}{|J|!}\prod_{j\in J}X_0^je_{j_1}\cdots e_{j_{|J|}}e_{d+1}.
 \end{equation}}
Similarly, for the ${\mathbf{T}}$-visibility transformation, we have {\small 
\begin{equation}
S(\gamma_{\mathbf{T}}(X))^{J^+}=S_X^J e_{j_1}\ldots e_{j_{|J|}}e_{d+1}
 \end{equation}}
 and
{\small  \begin{equation}
S(\gamma_{\mathbf{T}}(X))^{J^-}=\frac{(-1)^{|J|+1}}{|J|!}\prod_{j\in J}X_1^je_{d+1}e_{j_1}\cdots e_{j_{|J|}}.
 \end{equation}}
\end{theorem}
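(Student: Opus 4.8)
The plan is to derive all four identities from the factorisation in Theorem \ref{thm:general2}, combined with an explicit description of the signatures of the two auxiliary paths $f_{\mathbf{I}(X)}$ and $f_{\mathbf{T}(X)}$. The essential observation is that each of these is a concatenation of two straight line segments that are \emph{axis-aligned} in a complementary way. By Definition \ref{def:vt}, $f_{\mathbf{I}(X)}$ first runs from the origin to $[X_0^1,\ldots,X_0^d,0]$, an increment lying entirely in the first $d$ coordinates, and then rises to $[X_0^1,\ldots,X_0^d,1]$, an increment purely in the $(d+1)$st coordinate; the path $\stackrel{\leftarrow}{f_{\mathbf{T}(X)}}$ is built from the analogous two segments but traversed in the opposite order and orientation. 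Using the closed-form signature of a line segment, namely $S(\ell)^{(a_1,\ldots,a_m)}=\frac{1}{m!}\prod_{i=1}^m v^{a_i}$ for a segment of increment $v$, together with Chen's identity (Theorem \ref{thm:chen}) applied to the two pieces, I would first record the structural fact that $S_{f_{\mathbf{I}(X)}}^{(a_1,\ldots,a_m)}$ vanishes unless every letter from $[d]$ precedes every occurrence of $d+1$ in the word, in which case — with $p$ leading letters $b_1,\ldots,b_p\in[d]$ followed by $q=m-p$ copies of $d+1$ — its value factorises as $\big(\tfrac{1}{p!}\prod_i X_0^{b_i}\big)\cdot\tfrac{1}{q!}$. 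The corresponding statement for $f_{\mathbf{T}(X)}$ has the two blocks reversed and carries sign factors $(-1)^{p}$ and $(-1)^{q}$ coming from the descending orientation of its segments.

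With this lemma in hand, each identity reduces to selecting the admissible splits in Theorem \ref{thm:general2}. For the $\mathbf{I}$-transformation and the word $J^-=(d+1\,|\,J)$, the constraint $d+1\notin J_2$ forces the leading $d+1$ into $J_1$, so $J_1$ ranges over $(d+1\,|\,\text{prefix of }J)$; by the structural lemma only the shortest choice $J_1=(d+1)$ survives, since a genuine $[d]$-letter appearing after $d+1$ kills $S_{f_{\mathbf{I}(X)}}^{J_1}$. As $S_{f_{\mathbf{I}(X)}}^{(d+1)}=1$ and the matching suffix is $J_2=J$, the sum collapses to $S_X^{J}\,e_{d+1}e_{j_1}\cdots e_{j_{|J|}}$. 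For $J^+=(J\,|\,d+1)$ the same constraint instead forces $J_2=\emptyset$ and $J_1=J^+$, so only $S_{f_{\mathbf{I}(X)}}^{(j_1,\ldots,j_{|J|},d+1)}$ survives; here all $[d]$-letters precede the unique $d+1$, and the lemma evaluates it to $\tfrac{1}{|J|!}\prod_{j\in J}X_0^{j}$, giving the second formula.

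The $\mathbf{T}$-transformation is handled symmetrically, using the factorisation $S(\gamma_{\mathbf{T}}(X))=S(L_X)\otimes S(f_{\mathbf{T}(X)})$ and the constraint $d+1\notin J_1$. For $J^+=(J\,|\,d+1)$ the trailing $d+1$ must lie in $J_2$, and the vanishing criterion for $f_{\mathbf{T}(X)}$ leaves only the split $J_1=J$, $J_2=(d+1)$, so that $S(\gamma_{\mathbf{T}}(X))^{J^+}=S_X^{J}\,S_{f_{\mathbf{T}(X)}}^{(d+1)}\,e_{j_1}\cdots e_{j_{|J|}}e_{d+1}$, and evaluating the scalar $S_{f_{\mathbf{T}(X)}}^{(d+1)}$ from the first descending segment finishes this case. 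For $J^-=(d+1\,|\,J)$ the leading $d+1$ forces $J_1=\emptyset$ and $J_2=J^-$, whence the structural lemma (with $q=1$ copies of $d+1$ preceding $p=|J|$ letters from $[d]$) gives $S_{f_{\mathbf{T}(X)}}^{(d+1\,|\,J)}=\frac{(-1)^{|J|+1}}{|J|!}\prod_{j\in J}X_1^{j}$, which is precisely the claimed coefficient of $e_{d+1}e_{j_1}\cdots e_{j_{|J|}}$.

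The main obstacle is bookkeeping rather than conceptual depth: one must pin down exactly which splits $(J_1|J_2)$ are admissible and then confirm the vanishing of all but one (or two) auxiliary coefficients via the letter-ordering property of $f_{\mathbf{I}(X)}$ and $f_{\mathbf{T}(X)}$. The delicate point is the $\mathbf{T}$ case, where the reversal and descending orientation of the two segments inject the factors $(-1)^{|J|+1}$ and $\tfrac{1}{|J|!}$; getting these signs and factorials right, and reading off the orientation implicit in $\stackrel{\leftarrow}{f_{\mathbf{T}(X)}}$ of Definition \ref{def:vt}, is the only step that genuinely requires care.
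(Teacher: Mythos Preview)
Your proposal is correct and follows essentially the same route as the paper's own proof: both invoke the Chen-type decomposition of Theorem \ref{thm:general2}, isolate the single surviving split $(J_1\mid J_2)$ by showing that the relevant coefficient of $S(f_{\mathbf{I}(X)})$ (respectively $S(f_{\mathbf{T}(X)})$) vanishes whenever a $[d]$-letter sits on the wrong side of $d{+}1$, and then evaluate the one remaining auxiliary coefficient explicitly. The only cosmetic difference is that the paper records the vanishing and the value of $S_{f_{\mathbf{I}(X)}}^{J^+}$ ``by induction'' without further comment, whereas you obtain the same facts from the concrete description of $f_{\mathbf{I}(X)}$ (and $f_{\mathbf{T}(X)}$) as a concatenation of two axis-aligned segments together with the closed-form signature of a line; this is the natural way to \emph{carry out} that induction, so the two arguments coincide.
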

For instance, Eqn. \eqref{eqn:old_new_equal1} directly shows the new feature covers the effect over segments of the original path, i.e., $S_X$.
\begin{rmk}
Another important object related to the signature is \emph{the log-signature}, which is the logarithm of the signature \cite{lyons2007differential}. The log-signature is a parsimonious description of the signature, while the (truncated) log-signature and signature are bijective. So the effect of the initial (resp. tail) position can be captured by log-signature with the visibility transformation too. In contrast to the signature, the log-signature offers the benefit for dimension reduction, but it should be combined with non-linear models for approximating any functional on the unparameterised path space \cite{liao2019learning}. 
\end{rmk}
\subsection{The path from streamed data}\label{sec:visibility2}
In the machine learning context, we often work on streamed data $\mathbf{x}=\big(\mathbf{x}_1,\ldots,\mathbf{x}_n\big)$, where $\mathbf{x}$ contains $n$ observations, and the $i$th observation $\mathbf{x}_i$, $i\in [n]$, is assumed to be a $d$-dimensional column vector at the $i$th time point. We assume the time for the $i$th observation is simply $i$. Later on, to extract the signature feature, the first step is to embed the time series data into a path over a continuous time interval. To do this, we usually construct a $\mathbb{R}^d$-valued continous path from $\mathbf{x}$ through \emph{piece-wise linear interpolation} along each coordinate dimension, denoted by $\mathbf{X}:[1,n]\to \mathbb{R}^d$. That is,
{\small \begin{equation}
\mathbf{X}_t^j=\mathbf{x}_{\lfloor t \rfloor}^j+(t-\lfloor t \rfloor)(\mathbf{x}_{\lfloor t \rfloor+1}^j-\mathbf{x}_{\lfloor t \rfloor}^j), \ \text{for\ }t\in[1,n]\ \text{and\ }j\in [d],
\end{equation}}
where $\lfloor \cdot \rfloor$ denotes the integer part of a real number. On the other hand, signature features or log-signature features can be obtained directly using discrete data through the well-established Python packages \emph{iisignature} \cite{reizenstein2018iisignature} and \emph{esig}, where the piecewise linear interpolation is implemented automatically by the packages. 

However, the easiest way to apply the ${\mathbf{I}}$-visibility transformation (resp. the ${\mathbf{T}}$-visibility transformation) on the generated path $\mathbf{X}$ is not directly following Definition \ref{def:vt}. Instead we may first expand the streamed data $\mathbf{x}$ from $n$ observations of $d$ dimensional vectors to $n+2$ observations of $d+1$ dimensional vectors, through the discrete ${\mathbf{I}}$-visibility transformation (resp. the discrete ${\mathbf{T}}$-visibility transformation) as follows:
\begin{definition}\label{def:discretevt}
Given a discrete data sequence $\mathbf{x}:=(\mathbf{x}_i)_{i\in [n]}$ where $\mathbf{x}_i$ are $d$-dimensional column vectors. The \emph{discrete ${\mathbf{I}}$-visibility transformation} maps $\mathbf{x}$ to $\bar{\mathbf{x}}:=(\bar{\mathbf{x}}_j)_{j\in [n+2]}$, where $\bar{\mathbf{x}}_j$ are $d+1$-dimensional column vectors for $j\in [n+2]$ and given by 
{\small \begin{equation}
\bar{\mathbf{x}}_1=\mathbf{0},\ \ \bar{\mathbf{x}}_2=[\mathbf{x}_1^1,\ldots, \mathbf{x}_1^d,0]^T,
\end{equation}}
and 
{\small \begin{equation}
\bar{\mathbf{x}}_{k+2}=[\mathbf{x}_{k}^1,\ldots, \mathbf{x}_{k}^d,1]^T,\text{\ for\ }k\in [n]. 
\end{equation}}
Here $\mathbf{0}$ is the $d+1$ dimensional zero vector and $A^T$ denotes the transpose of matrix $A$. 

Similarly, the \emph{discrete ${\mathbf{T}}$-visibility transformation} maps $\mathbf{x}$ to a new sequence $\hat{\mathbf{x}}:=(\hat{\mathbf{x}}_j)_{j\in [n+2]}$, where $\hat{\mathbf{x}}_j$ are $d+1$-dimensional column vectors and given by 
{\small \begin{equation}
\hat{\mathbf{x}}_{k}=[\mathbf{x}_{k}^1,\ldots, \mathbf{x}_{k}^d,1]^T,\text{\ for\ }k\in [n],
\end{equation}}
and
{\small \begin{equation}
 \hat{\mathbf{x}}_{n+1}=[\mathbf{x}_n^1,\ldots, \mathbf{x}_n^d,0]^T,\ \tilde{\mathbf{x}}_{n+2}=\mathbf{0}.
\end{equation}}
\end{definition}
Take a discrete data with two 2-dimensional observations $[1,2]^T, [3,4]^T$ for example, the new discrete data after the discrete ${\mathbf{T}}$-visibility transformation would be
{\small \begin{equation}
\begin{bmatrix}
0 \\
0 \\
0
\end{bmatrix}, 
\begin{bmatrix}
 1\\
 2 \\
 0
\end{bmatrix},
\begin{bmatrix}
 1\\
 2 \\
 1
\end{bmatrix},
\begin{bmatrix}
 3\\
 4 \\
 1
\end{bmatrix},
\end{equation}}
where the new data has four 3-dimensional observations. \footnote{To compute its signature or log-signature via python package, this new discrete data needs to be reshaped to a matrix of 4 rows and 3 columns.}

Then we generate a new path $\bar{\mathbf{X}}$ (resp.  $\hat{\mathbf{X}}$) through piece-wise linear interpolation on $\bar{\mathbf{x}}$ (resp. $\hat{\mathbf{x}}$). $\bar{\mathbf{X}}$ (resp.  $\hat{\mathbf{X}}$) is exactly the lifted path of  $\mathbf{X}$ after the $\mathbf{I}$-visibility transformation (resp. the $\mathbf{T}$-visibility transformation). This construction coincides with Definition \ref{def:vt}. In this sense, the discrete visibility transformation can be treated as an intermediate transformation. The availability of the aforementioned Python packages allows for signature feature extraction directly from data after the discrete visibility transformation.  Note that the \emph{invisibility-reset transformation} introduced in \cite{yang2017skeleton} is indeed the discrete ${\mathbf{T}}$-visibility transformation defined in Definition \ref{def:discretevt}.

Meanwhile, streamed data can be manipulated through other transformations together with the discrete visibility transformation. For example, the lead and lag transform \cite{gyurko2013leadlag1}, which accounts for the quadratic variability in data, was used in \cite{yang2017skeleton} along with the discrete ${\mathbf{T}}$-visibility transformation for human action recognition tasks. Note that all transformations should be done before applying the discrete visibility transformation, as illustrated in the pipeline (Figure \ref{fig:workflow}).
\begin{rmk}\label{rmk:timecost}
Theorem \ref{thm:general1} illustrates that the $p$th level signature of $\mathbf{X}$ is captured in the $(p+1)$th level signature of $\bar{\mathbf{X}}$. This implies that we may simply truncate the signature of $\bar{\mathbf{X}}$ to the $(p+1)$th level if the signature of $\mathbf{X}$ up to the $p$th level is needed. In this case, however, the number of signature terms computed increases from $\frac{d}{d-1}(d^p-1)$ to $\frac{(d+1)}{d}((d+1)^{p+1}-1)$, which leads to a growth in computational cost for extracting signature features. For example, for $d=2$ and $p=2$, the number of terms to be computed increases from $6$ to $39$. On the other hand, based on the assumption that position information provides extra features, embedding position information into the signature surely increases precision to some extent. Thus there is a trade-off between computational burden and accuracy. 

In the first numerical experiment (Section \ref{eg:handwriting}), to be fair and consistent, we compared signatures with/without visibility transformation truncated at the same level. In this case the number of signature terms computed are $\frac{d}{d-1}(d^p-1)$ to $\frac{d+1}{d}((d+1)^p-1)$ respectively. Alternatively, to further reduce the time cost, one may compute log-signature features of $\bar{\mathbf{X}}$ instead.
\end{rmk}
\section{Applications}\label{sec:app}
\subsection{Classification of handwriting data}\label{eg:handwriting}
The concept of the visibility transformation originated in the analysis of temporal handwriting data, where the path in the visibility space indicates when one is writing the letter and the pen is visible. Apparently where one starts his/her writing is also crucial for making classification decisions. Here we chose character trajectories data set \cite{duo2019UCI} for assessment, where multiple, labelled samples of pen tip trajectories were recorded whilst writing individual characters \cite{williams2006handwriting, williams2007handwriting, williams2008handwriting}. The data consists of 2858 instances for 20 different characters, and was captured using a WACOM tablet at 200Hz. Each character sample is a 3-dimensional pen tip velocity trajectory, namely $(x,y,p)$, where $(x,y)$ is the trajectory and the $p$ coordinate represents the pen tip force. The lengths of the samples for the same character are not necessarily the same. 

The original handwriting data contains training set (50\%) and testing set (50\%). To account for quadratic variability of the path, we combined the $\mathbf{I}$-visibility transformation and the lead lag transform as described at the end of Section \ref{sec:visibility2}. All the data were then transformed to three different features respectively: truncated signatures with the lead lag transform (LLT), truncated signatures with LLT and being prefixed by the explicit initial position (IP), and truncated signatures with LLT and the $\mathbf{I}$-visibility transformation (IVT). We also extracted truncated signature features with LLT and IVT on the trajectory, namely the $(x,y)$ path only, where we ignored the pen tip force. For each feature method and the corresponding transformed training set, a lightGBM model was trained for classification with hyperparameter tuning implemented via grid search with cross validation. The signature-based lightGBM models for classifying 20 different characters were tested repeatedly so that the performance and the randomness of the models can be captured in terms of the average accuracy and standard deviation in Figure \ref{fig:accuracy1}. In the experiment, the signature features were truncated to levels $\{2,3,4,5,6\}$. We also briefly compared the computational efficiency of the four features in Figure \ref{fig:accuracy2}.

The increasing trend of the accuracy curves for the four models with $n$ suggests higher order terms of the signature are not redundant in this case. Across the three models on the full path $(x,y,p)$, the performance of the model with the visibility transformation is the best and the one with signature features alone is the worst across all $n$, which illustrates the power of the visibility transformation method in such applications. It is also worth noticing that the performance of classification using signature features with the visibility transformation on partial path $(x,y)$ is also superior to the classification using either signature features alone or prefixed initial position features on the full path $(x,y,p)$. This may imply that either the pen tip force dimension may not be too informative or our proposed method is very efficient in seizing pivotal and non-redundant information from limited data.

\begin{figure}[h!]
  \centering
\begin{subfigure}[b]{0.48\linewidth}
\includegraphics[width=2.5in]{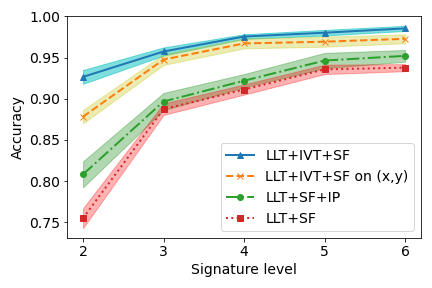}
\caption{Signature level VS average accuracy with standard deviation.\label{fig:accuracy1}}
\end{subfigure}
\begin{subfigure}[b]{0.48\linewidth}
\includegraphics[width=2.5in]{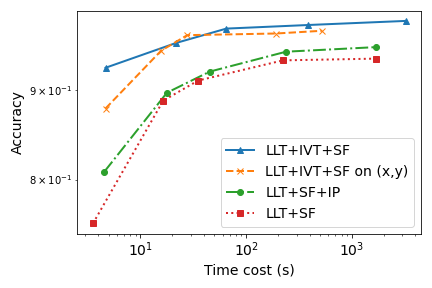}
\caption{CPU time VS average accuracy (log-log scale).\label{fig:accuracy2}}
\end{subfigure}
\caption{Average accuracy curves for handwriting classification with different signature features, where 'LLT' is short for the lead lag transform , 'IVT' short for the $\mathbf{I}$-visibility transformation, 'SF' short for signature feature and "IP" short for the initial position.}
\label{fig:accuracy}
\end{figure}

Clearly, the lightGBM models based on truncated signatures with IVT are more efficient than the models without in terms of computational cost as shown in Figure \ref{fig:accuracy2}. From Remark \ref{rmk:timecost}, the time cost of the model based on truncated signatures with IVT and LLT is approximately 1.3 to 2.5-times more expensive than the time cost of the model based on truncated signatures with LLT and IP at the same signature level. But due to its better accuracy the former model is superior for all time costs. Besides, compared to the model based on truncated signature features (the LLT+SF curve) at each fixed time cost, the higher accuracy from the model based on truncated signatures with the explicit initial position (the LLT+SF+IP curve)  indicates that even the linear effect of the initial position provides additional information for this learning task.
\begin{table}[!t]
\renewcommand{\arraystretch}{1.2}
\caption{Comparison for handwriting classification with different methods.}
\label{tab:3}
\centering
\begin{tabular}{|c||c|}
\hline 
\bf Method  & \bf Accuracy  \\
\hline 
\textbf{VHEM}- \textbf{H3M} \cite{coviello2014clustering}  &  65.10\%  \\
\hline
\textbf{FK} \cite{jaakkola1999generative}  & 89.26\%  \\
\hline
$\phi$(\textbf{O},\textbf{HMM})+\textbf{SVM} \cite{perina2009entropydistance} & 92.91\%  \\
\hline
\textbf{TK} \cite{tsuda2002kernel}  & 93.67\%  \\
\hline 
\textbf{LLT}+\textbf{IVT}+\textbf{SF} on $(x,y)$ & \bf 97.27\%  \\
\hline  
\textbf{SDD} \cite{grabocka2016shapelet} &  98.00\% \\
\hline  
\textbf{MCDS} \cite{iosifidis2012fuzzy} &  98.25\% \\
\hline 
\textbf{LLT}+\textbf{IVT}+\textbf{SF} & \bf 98.54\%  \\
\hline 
\end{tabular}
\end{table}

In Table \ref{tab:3}, the best performance of our method is compared with that of
other methods proposed in the literature on this handwritten character database. Row $2$ to $5$ are results from classification based on different hidden Markov models (HMMs): the cluster HMM on the hierarchical expectation–maximization (\textbf{VHEM}- \textbf{H3M}) \cite{coviello2014clustering}, the support vector machine on features from HMM embedded entropy feature extractor ($\phi$(\textbf{O},\textbf{HMM})+\textbf{SVM}, \cite{perina2009entropydistance}), and the generative classification based on HMM and the Fisher (\textbf{FK}) and TOP Kernels (\textbf{TK}). The best performance of our method (\textbf{LLT}+\textbf{IVT}+\textbf{SF}) can be as high as 98.54\%, and the best performance of our method on partial path $(x,y)$ can achieve 97.27\%. Even the worst performance of \textbf{LLT}+\textbf{IVT}+\textbf{SF} (at level 2) is beyond all the HMM-related classifications. Comparing to the scalable shapelet discovery (\textbf{SDD}) in \cite{grabocka2016shapelet} and the modified clustering discriminant analysis (\textbf{MCDS}) \cite{iosifidis2012fuzzy} based on an iterative optimisation procedure which both provider high precision, the implementation of our proposed method is much easier with well-designed Python packages as mentioned before.
\subsection{Gesture classification: Chalearn 2013 data}\label{eg:Chalearn}
This example is borrowed from \cite{liao2019learning}.  The ChaLearn 2013 multi-modal gesture datase  \cite{escalera2013ChaLearn} contains 23 hours of Kinect data of 27 subjects performing 20 Italian gestures. The data includes RGB, depth, foreground segmentations and full body skeletons. In \cite{liao2019learning}, a log-signature-based recurrent neural network model (PT-Logsig-RNN, see Figure 3 in \cite{liao2019learning}), which consists with path transformation layers, a log-signature (sequence) layer,  an initial position layer\footnote{The initial position layer was not visualised in Figure 3 \cite{liao2019learning} but is parallel to the log-signature layer.}, the RNN-type layer and the last fully connected layer, was built for gesture recognition on skeleton data and achieved state-of-the-art accuracy. Put in other words, the architecture utilised the log-signature of paths together with the explicit initial position values for gesture classification tasks. 

Following \cite{liao2019learning}, we use only skeleton data for the gesture recognition. To classify the 20 gestures using log-signature features with the visibility transformation, we modified the architecture of PT-Logsig-RNN Model by adding a visibility transformation layer to the end of path transformation layers, which is consistent with the proposed workflow of the pipeline (Figure \ref{fig:workflow}), and deleting the initial position layer. We attempted both the ${\mathbf{T}}$-visibility transformation and ${\mathbf{I}}$-visibility transformation. In contrast with the experience in \cite{yang2017skeleton}, the best performance for this task achieved when truncating the log-signature at level $3$ with the ${\mathbf{I}}$-visibility transformation. This implies that applying either ${\mathbf{I}}$-visibility transformation or ${\mathbf{T}}$-visibility transformation is dataset-dependent.
\begin{table}[!t]
\renewcommand{\arraystretch}{1.2}
\caption{Comparison of different methods on the Chalearn 2013 data.}
\label{tab:1}
\centering
\begin{tabular}{|c||c|}
\hline 
\bf Method  & \bf Accuracy  \\
\hline 
{\bf Deep LSTM} \cite{shahroudy20163dhumanactivity} &  87.10\%  \\
\hline
{\bf Two-stream LSTM} \cite{wang2017twostream} & 91.70\%  \\
\hline
{\bf ST-LSTM + Trust Gate} \cite{liu2017trustgates}  & 92.00\%  \\
\hline
{\bf Three-stream net TTM}\cite{li2019severalfullconnected} & 92.08\%  \\
\hline 
{\bf PT-Logsig-RNN} \cite{liao2019learning} & 92.21\%  \\
\hline  
{\bf Modified PT-Logsig-RNN}&  \bf 92.89\% \\
\hline 
\end{tabular}
\end{table}
We compared this modified PT-Logsig-RNN model (level 3) with several state-of-the-art methods in Table \ref{tab:1}. Row 2 to Row 5 are results from LSTM-based models: a recurrent neural network structure to model the long-term temporal correlation of the features for each body part ({\bf Deep LSTM})\cite{shahroudy20163dhumanactivity}, a two-stream RNN architecture to model both temporal dynamics and spatial configurations for skeleton based action
recognition ({\bf Two-stream LSTM})\cite{wang2017twostream}, a spatio-temporal LSTM with an additional gate to analyze the
reliability of the input measurements ({\bf ST-LSTM + Trust Gate})\cite{liu2017trustgates}, and a three-stream temporal transformer module to actively transform the input data temporally and finally adjust the key frame to the best time stamp for the network ({\bf Three-stream net TTM})\cite{li2019severalfullconnected}. Our modified PT-Logsig-RNN model (level 3) achieved the state-of-the-art accuracy.

Compared to the original PT-Logsig-RNN model, the higher accuracy of the modified PT-Logsig-RNN model (level 3) is a result of applying visibility transformation. It is also worth noticing that the visibility transformation allows easy implementation/modification without changing the original pipeline of neural networks as illustrated in the workflow in Figure \ref{fig:workflow}.

\section{Conclusion}\label{sec:con}
To capture the informative information from streamed data for learning tasks, we explore a transformation that encodes the effects on the absolute position of streamed data into signature features with theoretical justifications. The enhanced feature is unified, theoretical-backed, and simple to implement with. It is superior to many benchmark methods that require handy data preparation and implementation of complicated algorithms in applications when absolute position of the data is intrusive. 

\section*{Acknowledgements}
YW, HN, TJL were supported by the Alan Turing Institute under the EPSRC
grant EP/N510129/1 and by EPSRC under EP/S026347/1.

\section*{Appendix}

\begin{proof}[Proof of Theorem \ref{thm: preserving}]
For two continuous bounded variation paths $X$ and $Y$, the paths generated by the $\mathbf{I}$-visibility transformation can be denoted as $f_X*L_X$ and $f_Y*L_Y$ according to Definition \ref{def:vt}. The assumption $X\sim Y$ leads to $L_X\sim L_Y$ naturally. Meanwhile, $X\sim Y$ implies $\mathbf{I}(X)=\mathbf{I}(Y)$ and $\mathbf{T}(X)=\mathbf{T}(Y)$. This further implies that $f_X=f_Y$. Then we can conclude that $f_X*L_X\sim f_Y*L_Y$ by using the fact that the concatenation respects $\sim$ \cite{hambly2010uniqueness}. 

The assertion for the $\mathbf{T}$-visibility transformation follows a similar argument.

\end{proof}

\begin{proof}[Proof of Theorem \ref{thm:general2}]
For a multi-index collection $J$ such that $d+1\notin J$, it is very clear that $S(L_X)^J=S(X)^J$. Note that for any multi-index collection $I$ such that $d+1\in I$, $S(L_X)^I=0$. Then the assertion follows from Chen's identity (Theorem \ref{thm:chen}) and tensor product of \eqref{eqn:power_rep}.
\end{proof}
\begin{proof}[Proof of Theorem \ref{thm:general1}]
Eqn.\eqref{eqn:combination1} in Theorem \ref{thm:general2} leads to
{\small \begin{align*}
&S(\gamma_{\mathbf{I}}(X))^{J^-}=(S(f_{\mathbf{I}(X)})\otimes S(L_X))^{J^-}\\
    &=\sum_{ \mathrel{\substack{(J_1|J_2) =J^-\\ d+1\notin J_2} }} S_{f_{\mathbf{I}(X)}}^{J_1}S_{X}^{J_2}e_{m_1}\cdots e_{m_{|J_1|}} e_{h_1}\cdots e_{h_{|J_2|}},
\end{align*}}
where $J_1:=(m_1,\ldots,m_{|J_1|})$, $J_2:=(h_1,\ldots,h_{|J_2|})$, $S_{f_{\mathbf{I}(X)}}$ and $S_{X}$ are the corresponding coefficients of $S(f_{\mathbf{I}(X)})$ and $S(X)$ respectively.
On the other hand, for the collection $J_1$ with $|J_1|\geq 2$, i.e., $J_1=(d+1,h_2,\cdots,h_{|J_2|})$, $S_{f_{\mathbf{I}(X)}}^{J_1}=0$. This can be shown by induction, and is omitted here. Thus the only non-vanishing term is when $J_1=(d+1)$, where $S_{f_{\mathbf{I}(X)}}^{J_2}=1$.
In total, we have that by setting $J_2=J$
{\small \begin{align*}
&S(\gamma_{\mathbf{I}}(X))^{J^-}= S_{f_{\mathbf{I}(X)}}^{d+1}S_{X}^{J}e_{d+1}e_{j_1}\cdots e_{j_{|J|}}=S_{X}^{J}e_{d+1}e_{j_1}\cdots e_{j_{|J|}}.
\end{align*}}
For the second part, with multi-index ${J^+}:=(J|d+1)$, using a similar argument as for $J^-$, we can conclude from Eqn.\eqref{eqn:combination1} in Theorem \ref{thm:general2} again that
{\small \begin{align*}
&S(\gamma_{\mathbf{I}}(X))^{J^+}=(S(f_{\mathbf{I}(X)})\otimes S(L_X))^{J^+}
    =S_{f_{\mathbf{I}(X)}}^{J^+}e_{j_1},\ldots,e_{j_{|J|}}e_{d+1}.
\end{align*}}
Now it remains to compute $S_{f_{\mathbf{I}(X)}}^{J^+}$. For non-empty $J$, by induction again we can show that 
{\small\begin{align*}S_{f_{\mathbf{I}(X)}}^{J^+}=\frac{1}{|J|!}\prod_{j\in J}X_0^j.
\end{align*}}
The assertion for the $\mathbf{T}$-visibility transformation follows a similar argument.
\end{proof}


\begin{thebibliography}{100}

\bibitem{chen1958chen}
Chen, K.T., Integration of paths--A faithful representation of paths by noncommutative formal power series.  \emph{Transactions of the American Mathematical Society}, 89.2 (1958): 395-407.

\bibitem{coviello2014clustering}
Coviello, E., Chan, A.B. and Lanckriet, G.R., Clustering hidden Markov models with variational HEM.  \emph{The Journal of Machine Learning Research}, 15.1 (2014): 697-747.

\bibitem{duo2019UCI}
Dua, D. and Graff, C., UCI Machine Learning Repository.  \emph{[http://archive.ics.uci.edu/ml]}, (2019): Irvine, CA: University of California, School of Information and Computer Science.

\bibitem{escalera2013ChaLearn}
Escalera, S., Gonz\`alez, J., Bar\'o, X., Reyes, M., Lopes, O., Guyon, I., Athitsos, V. and Escalante, H. 
 \emph{Multi-modal gesture recognition challenge 2013: Dataset and results},  In Proceedings of the 15th ACM on International conference on multimodal interaction, (2013): 445-452.
 
\bibitem{gaham2013sparse}
Graham, B., Sparse arrays of signatures for online character recognition. \emph{arXiv preprint},arXiv:1308.0371.

\bibitem{grabocka2016shapelet}
Grabocka, J., Wistuba, M. and Schmidt-Thieme, L.,  Fast classification of univariate and multivariate time series through shapelet discovery. \emph{Knowledge and information systems}, 49.2 (2016): 429-454.


\bibitem{gyurko2013leadlag1}
Gyurk\'o, L.G., Lyons, T., Kontkowski, M. and Field, J., Extracting information from the signature of a financial data stream.  \emph{arXiv preprint}, arXiv:1307.7244.

\bibitem{hambly2010uniqueness}
Hambly, B. and Lyons, T., Uniqueness for the signature of a path of bounded variation and the reduced path group. \emph{Annals of Mathematics}, 171.1 (2010):109-167.

\bibitem{iosifidis2012fuzzy}
Iosifidis, A., Tefas, A. and Pitas, I.,  Multidimensional sequence classification based on fuzzy distances and discriminant analysis. \emph{IEEE Transactions on Knowledge and Data Engineering}, 25.11 (2012): 2564-2575.

\bibitem{jaakkola1999generative}
Jaakkola, T. and Haussler, D., Exploiting generative models in discriminative classifiers. \emph{In Advances in neural information processing systems}, (1999): 487-493.

\bibitem{levin2013past}
Levin, D., Lyons, T. and Ni, H., Learning from the past, predicting the statistics for the future, learning an evolving system. \emph{arXiv preprint}, arXiv:1309.0260.

\bibitem{liao2019learning}
Liao, S., Lyons, T., Yang, W. and Ni, H., Learning stochastic differential equations using RNN with log signature features. \emph{arXiv preprint}, arXiv:1908.08286.


\bibitem{li2019severalfullconnected}
Li, C., Zhang, X., Liao, L., Jin, L. and Yang, W., Skeleton-based gesture recognition using several fully connected layers with path signature features and temporal transformer module.  \emph{In Proceedings of the AAAI Conference on Artificial Intelligence}, 33 (2019): 8585-8593.


\bibitem{liu2017trustgates}
Liu, J., Shahroudy, A., Xu, D., Kot, A.C. and Wang, G., Skeleton-based action recognition using spatio-temporal lstm network with trust gates. \emph{IEEE transactions on pattern analysis and machine intelligence}, 40.12(2017):3007-3021.


\bibitem{lyons2007differential}
Lyons, T.J., Caruana, M. and L\'evy, T., \emph{Differential equations driven by rough paths}. Springer Berlin Heidelberg, 2007.

\bibitem{lyons2014signature}
Lyons, T., Rough paths, signatures and the modelling of functions on streams. \emph{arXiv preprint}, arXiv:1405.4537. 


\bibitem{lyons2002control}
Lyons, T., and Qian, Z., \emph{System control and rough paths.} Oxford University Press, 2002.

\bibitem{moore2019using}
Moore, P.J., Lyons, T.J., Gallacher, J. and Alzheimer’s Disease Neuroimaging Initiative, Using path signatures to predict a diagnosis of Alzheimer’s disease. \emph{PloS one},  14.9 (2019).

\bibitem{ree1958shuffle}
Ree, R., Lie elements and an algebra associated with shuffles. \emph{Annals of Mathematics,} (1958): 210-220.

\bibitem{reizenstein2018iisignature}
Reizenstein, J., The iisignature library: efficient calculation of iterated-integral signatures and log signatures. \emph{arXiv preprint,} arXiv:1802.08252.


\bibitem{perina2009entropydistance}
Perina, A., Cristani, M., Castellani, U. and Murino, V., A new generative feature set based on entropy distance for discriminative classification. \emph{International Conference on Image Analysis and Processing}, (2009): 199-208, Springer, Berlin, Heidelberg.

\bibitem{shahroudy20163dhumanactivity}
Shahroudy, A., Liu, J., Ng, T.T. and Wang, G., Ntu rgb+ d: A large scale dataset for 3d human activity analysis. \emph{In Proceedings of the IEEE conference on computer vision and pattern recognition}, (2016):1010-1019.

\bibitem{tsuda2002kernel}
Tsuda, K., Kawanabe, M., Rätsch, G., Sonnenburg, S. and Müller, K.R., A new discriminative kernel from probabilistic models. \emph{Advances in Neural Information Processing Systems}, (2002): 977-984.

\bibitem{xie2017learning}
Xie, Z., Sun, Z.,  Jin, L.,  Ni, H.,  and  Lyons, T., Learning spatial-semantic context with fully convolutional recurrent network for online handwritten Chinese text recognition. \emph{IEEE transactions on pattern analysis and machine intelligence},   40.8 (2017): 1903-1917.

\bibitem{wang2019speech}
Wang, B., Liakata, M., Ni, H., Lyons, T., Nevado-Holgado, A.J. and Saunders, K., A Path Signature Approach for Speech Emotion Recognition. \emph{Interspeech 2019}, ISCA (2019): 1661-1665.

\bibitem{wang2020speech}
Wang, B., Wu, Y., Taylor, N., Lyons. T., Liakata, M., Nevado-Holgado, A.J. and Saunders, K., Learning to Detect Bipolar Disorder and Borderline Personality Disorder with Language and Speech in Non-Clinical Interviews. \emph{Interspeech 2020}.


\bibitem{wang2017twostream}
Wang, H. and Wang, L., Modeling temporal dynamics and spatial configurations of actions using two-stream recurrent neural networks. \emph{In Proceedings of the IEEE Conference on Computer Vision and Pattern Recognition}, 2017: 499-508.


\bibitem{williams2006handwriting}
Williams, B.H., Toussaint, M. and Storkey, A.J., Extracting motion primitives from natural handwriting data. \emph{International Conference on Artificial Neural Networks}, (2006): 634-643, Springer, Berlin, Heidelberg.

\bibitem{williams2007handwriting}
Williams, B.H., Toussaint, M. and Storkey, A.J., A Primitive Based Generative Model to Infer Timing Information in Unpartitioned Handwriting Data. \emph{International Joint Conferences on Artificial Intelligence}, (2007): 1119-1124.

\bibitem{williams2008handwriting}
Williams, B., Toussaint, M. and Storkey, A.J., Modelling motion primitives and their timing in biologically executed movements. \emph{Advances in neural information processing systems}, (2008): 1609-1616).

\bibitem{wu2020mentalhealth}
Wu, Y., Lyons, T.J. and Saunders, K.E.A., Deriving information from missing data: implications for mood prediction.
\emph{arXiv preprint}, arXiv:2006.15030. 


\bibitem{yang2017skeleton}
Yang, W., Lyons, T., Ni, H., Schmid, C.,  Jin, L., and  Chang, J., Leveraging the Path Signature for Skeleton-based Human Action Recognition. \emph{arXiv preprint},  arXiv:1707.03993.


\end{thebibliography}
\end{document}